\documentclass{article}

\usepackage{microtype}
\usepackage{graphicx}
\usepackage{subcaption}
\usepackage{booktabs} 

\usepackage{hyperref}
\usepackage{xurl}

\usepackage[preprint]{icml2026}

\usepackage{amsmath}
\usepackage{amssymb}
\usepackage{mathtools}
\usepackage{amsthm}

\usepackage{url}
\usepackage{amsfonts}
\usepackage{nicefrac}
\usepackage{xcolor}
\usepackage{multirow}
\usepackage{paralist}
\usepackage{xspace}

\newcommand{\method}{GSUO\xspace}
\newcommand{\signalopt}{SGO\xspace}
\newcommand{\compopt}{CO\xspace}

\usepackage[capitalize,noabbrev]{cleveref}

\theoremstyle{plain}
\newtheorem{theorem}{Theorem}[section]

\theoremstyle{definition}

\theoremstyle{remark}

\usepackage[textsize=tiny]{todonotes}

\icmltitlerunning{Signal-Guided Optimization for Machine Unlearning}

\begin{document}

\twocolumn[
  \icmltitle{Signal-Guided Optimization for Machine Unlearning}



  \icmlsetsymbol{equal}{*}

  \begin{icmlauthorlist}
    \icmlauthor{Xujia Li}{ustc}
    \icmlauthor{Dan Li}{sysu}
    \icmlauthor{Jian Lou}{sysu}
    \icmlauthor{Wenjie Feng$^{*}$}{ustc}
\end{icmlauthorlist}

\icmlaffiliation{ustc}{University of Science and Technology of China}

\icmlaffiliation{sysu}{Sun Yat-sen University}

\icmlcorrespondingauthor{Wenjie Feng}{fengwenjie@ustc.edu.cn}

  \vskip 0.3in
]



\printAffiliationsAndNotice{}  

\begin{abstract}
  Current machine unlearning methods predominantly rely on global, coarse-grained intervention strategies. 
  They lack precise pilot signals to guide the unlearning process and 
  fail to provide differentiable guidance across different unlearning tasks. 
  Due to the varying memorization strengths of samples during original training, 
  such a uniform strategy leads to two problems: 
  some samples are over-unlearned, which harms model utility; 
  while others are under-unlearned, leaving residual information that can be exploited by privacy attacks. 
  In this paper, we propose \method, a guidance-signal-aware unlearning optimization framework that 
  designs task-specific fine-grained guidance signals to steer the unlearning process
  and is applicable to both random-subset and class-wise forgetting tasks.
  Extensive experiments demonstrate that \method outperforms 14 baselines in terms of both unlearning effectiveness and generalization, 
  while achieving high efficiency and significant speedups, validating its effectiveness for reliable machine unlearning.
\end{abstract}

\section{Introduction}
With growing awareness of privacy preservation and the legal guarantee of the ``right to be forgotten'' under regulations such as GDPR~\citep{VB17gdpr},
deep models are expected to possess the capability to remove the influence of specific samples from training data.
Moreover, outdated information or maliciously poisoned samples in the training data further reinforce the need for models to erase such data.


To alleviate the prohibitive overhead of retraining, which involves training from scratch on the retain data and is commonly regarded as the gold standard~\citep{DBLP:journals/tist/NguyenHRNLYN25, DBLP:journals/csur/XuZZZY24}, 
various approximate unlearning methods have consequently been proposed~\citep{DBLP:conf/aaai/ChundawatTMK23,DBLP:conf/nips/KurmanjiTHT23,DBLP:journals/tnn/TarunCMK24,DBLP:conf/cvpr/ChenGL0W23,DBLP:conf/kdd/ChenG0LA025,DBLP:journals/corr/abs-2601-18650,DBLP:conf/iclr/FanLZ0W024}.
For instance, SCRUB~\citep{DBLP:conf/nips/KurmanjiTHT23} employs a teacher-student framework with contrastive optimization, and SalUn~\citep{DBLP:conf/iclr/FanLZ0W024} identifies salient weights based on forgetting loss gradients, followed by random-label fine-tuning. 
For class-wise forgetting, Boundary Shrink alters the labels of forget samples via adversarial perturbations, 
and Boundary Expanding introduces a ``shadow class'' neuron to disperse such samples~\citep{DBLP:conf/kdd/ChenG0LA025}.

However, approximate unlearning methods typically either perform unified optimization for all forget samples without differentiation~\citep{DBLP:conf/aaai/ChundawatTMK23, DBLP:conf/nips/KurmanjiTHT23} or solely modify labels directly~\citep{DBLP:conf/kdd/ChenG0LA025, DBLP:conf/iclr/FanLZ0W024},
resulting in the neglect of varying memorization strengths of different samples during training, 
their intervention strategies therefore are generally global and coarse-grained, 
lacking precise signals to finely guide the unlearning process. 
As a result, it leads to \emph{either over-unlearning that degrades model generalization 
or incomplete unlearning that leaves residual information vulnerable to privacy attacks}. 

To this end, 
we propose the \method framework
to exert unlearning relying on targeted guidance signals 
which are designed for diverse samples based on their context and properties across different forgetting tasks,
aiming to reduce or avoid over-forgetting and under-forgetting.
Specifically, for random-subset forgetting, where forget samples are uniformly scattered in feature space and share similarities with other retain data, 
\method outlines the guidance signals as diverse target distributions to align according to the categories (i.e. \textit{Normal} and \textit{Boundary}) forget samples belong to;
for class-wise forgetting, where forget samples form tight clusters in feature space, \method employs intra-class dispersion loss and alignment loss as the guidance signals to prevent performance degradation resulting from disruption of the class boundaries learned by the model.
Thus, by explicitly specifying the unlearned state of the forget samples, \method can perform tailored optimization with a well-defined objective.



Comprehensive experimental results demonstrate that \method consistently achieves the best performance across a wide range of metrics, including fidelity, generalization ability, and forgetting efficacy, 
while it maintains high efficiency and delivers substantial speed-ups, outperforming all 14 state-of-the-art baselines. 
On the random-subset forgetting task, 
\method achieves the smallest accuracy gap between test and forget set, reaching $0.17\%$ for ResNet-18, 
along with the highest test accuracy of $82.08\%$.  
It also delivers a substantial speed-up of up to $31\times$ and the lowest AUC score of $0.749$ under strong membership inference attacks (MIA). 
For class-wise forgetting, \method demonstrates comparable performance, achieving complete unlearning while maintaining the highest training and test accuracies.


\section{Related Work}
\label{sec:related}
Machine Unlearning was first introduced by \citep{DBLP:conf/sp/CaoY15} to eliminate the influence of training samples 
on a learned model efficiently and completely by sharding data and building multiple models, 
enabling the exact unlearning of specific data partitions. 
For approximate unlearning, numerous methods have been developed to efficiently erase the influence of data through post-training procedures. 
Methods leveraging second-order information or influence functions~\citep{DBLP:conf/aistats/IzzoSCZ21, DBLP:conf/nips/SekhariAKS21, DBLP:conf/icml/KohL17, DBLP:conf/icml/GuoGHM20} approximately estimate the contribution of samples to the loss function to revoke their influence. 
However, most of these approaches are limited to convex models or incur high computational costs when applied to deep networks. 
For deep neural networks, \citep{DBLP:conf/eccv/GolatkarAS20, DBLP:conf/cvpr/GolatkarAS20} proposed more practical approaches based on information projection and quadratic penalty, respectively. 
\citep{DBLP:conf/iclr/FanLZ0W024} proposed SalUn , which computes a weight saliency mask to selectively update only the parameters most relevant to the forget data, achieving an excellent balance between preserving performance on the retain set and unlearning effectiveness. 
\citep{DBLP:conf/aaai/ChundawatTMK23,DBLP:conf/nips/KurmanjiTHT23} employ a teacher-student framework for unlearning, with the former using a dual-teacher mechanism and the latter providing directional guidance from a single teacher. In such frameworks, the soft labels generated by the teacher model guide the student model to learn a new output distribution. 
\citep{DBLP:journals/pami/HeLCHH25} proposed \emph{Natural Unlearning} to generate new samples by mixing each forget sample with related samples from the retain set, and then fine-tuning the original model on these mixed samples. 

Another line of work focuses on 
class forgetting, which requires the model to completely remove its knowledge of an entire concept or class. 
\citep{DBLP:journals/tnn/TarunCMK24} propose UNSIR, which learns ``error-maximizing noise'' for the target class and combines a ``damage-repair'' two-step weight update to achieve thorough forgetting of single or multiple classes in a single training pass. 
\citep{DBLP:conf/cvpr/ChenGL0W23,DBLP:conf/kdd/ChenG0LA025} shift the focus from the parameter space to the decision space, 
Boundary Shrink assigns incorrect labels to forget samples via adversarial perturbations and fine-tunes the model, forcing the decision boundary to shrink; while Boundary Expanding introduces a new neuron for ``shadow class'' to disperse forget samples into other categories, actively expanding the decision boundary. \citep{DBLP:conf/kdd/ChenG0LA025} and \cite{DBLP:journals/corr/abs-2601-18650} focus on achieving class-wise forgetting on long-tailed distribution datasets.

A detailed discussion of related work is provided in Appendix~\ref{app:B}.

\section{Problem Formulation}
\label{sec:problem}

\paragraph{Notation} 
Given a training dataset 
$\mathcal{D} = \left\{(x_i, y_i)\right\}_{i=1}^N \subseteq \mathcal{X} \times \mathcal{Y}$ consisting of $N$ samples
where $x_i \in \mathcal{X}$ and $y_i \in \mathcal{Y}$ are independently and identically distributed drawn from a joint distribution $\mathcal{P}$ over $\mathcal{X} \times \mathcal{Y}$; $\mathcal{X} \subseteq \mathbb{R}^F$ with dimension as $F$ and label space $\mathcal{Y} = \{1, \ldots, C\}$ with $C$ classes.
Let $f(\cdot; \Theta)$ denote a deep neural network model with trainable parameters $\Theta$. The model $f(\cdot; \Theta_o)$ trained over $\mathcal{D}$ producing $\Theta_o$ is referred to as the \textit{original model}, which consists of a feature extractor $\Phi(\cdot; \psi)$ and a classifier $h(\cdot; \omega)$, i.e., $f(x; \Theta_o) = h(\Phi(x; \psi); \omega)$; therefore, $\Theta_o = \{\psi, \omega\}$.

\paragraph{Machine Unlearning} Let $\mathcal{D}_f \subset \mathcal{D}$ be a subset of the training dataset 
composed of $N_f$ samples as the \textit{forget set}, and its complement $\mathcal{D}_r$ be the retain set, that is,
\(\mathcal{D} = \mathcal{D}_f \cup \mathcal{D}_r\) with \(\mathcal{D}_f \cap \mathcal{D}_r = \emptyset\). 
The original model $f(\cdot; \Theta_o)$ is trained on \(\mathcal{D}\) by minimizing the cross-entropy loss \(\mathcal{L}(\hat{y}, y) = -\sum_{k=1}^C y_k \log(\hat{y}_k)\). 
The retrained model, denoted as $f(\cdot; \Theta_*)$, is obtained by training on the retain set \(\mathcal{D}_r\) alone. 
The goal of machine unlearning is to eliminate the influence of \(\mathcal{D}_f\) from $f(\cdot; \Theta_o)$ via some unlearning algorithm \(U\), producing updated weights \(\Theta_u\) such that the unlearned model $f(\cdot; \Theta_u)$ behaves as if it had never seen \(\mathcal{D}_f\). 
In other words, the performance of $f(\cdot; \Theta_u)$ on \(\mathcal{D}_f\) should approximate its performance on a held-out test set $\mathcal{D}_t$, i.e.,
$f(\mathcal{D}_f; \Theta_u) \approx f(\mathcal{D}_t; \Theta_o)$.

\paragraph{Random-subset and class-wise forgetting}
According to the composition of the forget set \(\mathcal{D}_f\), the unlearning tasks can be categorized into the following two categories. 
\begin{compactitem}
    \item \textit{Random-subset forgetting}: 
    $\mathcal{D}_f$ is an arbitrary subset of $\mathcal{D}$, 
    $\mathcal{D}_f \subset \mathcal{D}$ with $|\mathcal{D}_f| = N_f \ll N$, and the samples to be forgotten may come from multiple classes of $\mathcal{Y}$. 
    \item \textit{Class-wise forgetting}: 
    $\mathcal{D}_f$ contains all training samples of a subset of target classes $\mathcal{Y}_f \subset \mathcal{Y}$: $ \mathcal{D}_f = \{(x_i, y_i) \in \mathcal{D} \mid y_i = c, \forall c \in \mathcal{Y}_f\}$. $|\mathcal{Y}_f| = 1$ reduces to single-class forgetting.
\end{compactitem}

\section{Proposed method: \method}
\label{SGA-TO}

\begin{figure*}[t]
    \centering
    \includegraphics[width=0.9\linewidth]{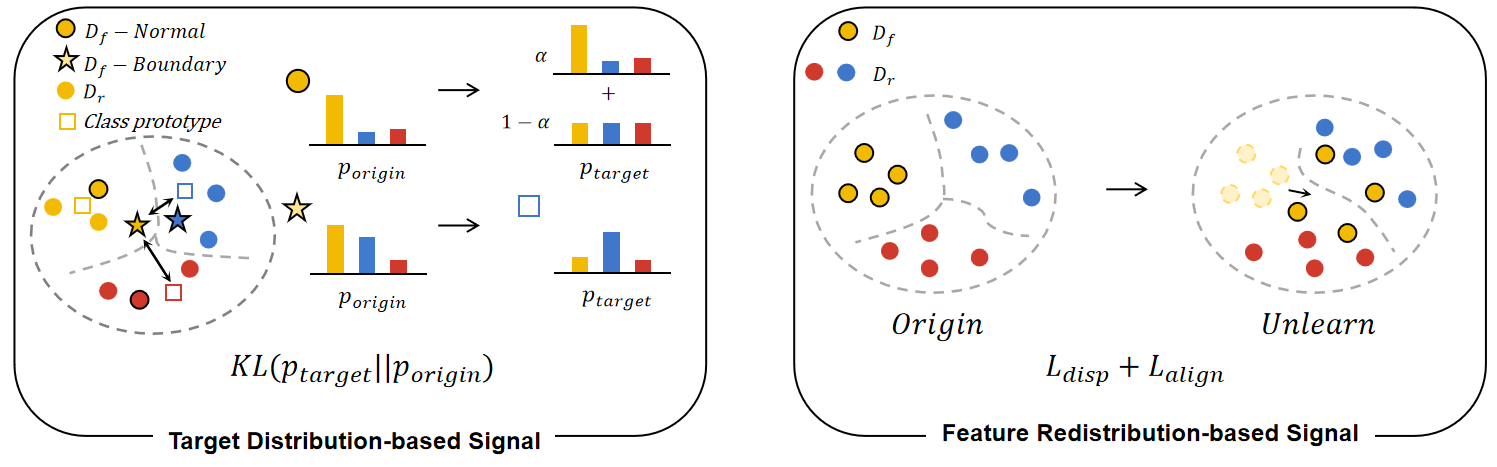}
    \caption{The guidance-signal design in the \signalopt step of \method. 
      \textbf{L}: Target Distribution-based Signal for random-subset forgetting, 
      \textbf{R}: Feature Redistribution-based Signal for class-wise forgetting.}
    \label{fig:framework}
\end{figure*}

In this section, we propose the Guidance-Signal-aware Unlearning Optimization (\textbf{\method}), 
a unified framework for performing sample unlearning based on explicitly designed guidance signals,
which can be applied to both random-subset and class-wise forgetting tasks.
Specifically, \method consists of the following two successive optimization steps:
\begin{compactitem}
    \item \textbf{S1.} \textit{Signal-Guided Optimization (\signalopt)}: leveraging specifically designed guidance signals for diverse samples to steer the unlearning process.
    \item \textbf{S2.} \textit{Compensatory Optimization (\compopt)}: employing a general cross-entropy loss over the retain set and/or the forget set 
    to enhance model utility via gradient descent/ascent (GD/GA). 
\end{compactitem}

For the guidance signal in \signalopt, based on the sample behavior in the ideal unlearned model,
we design \emph{Target Distribution-based Signal} for random-subset forgetting task
and \emph{Feature Redistribution-based Signal} for class-wise forgetting task, 
which will be elaborated in detail in the following subsections.
\figureautorefname{~\ref{fig:framework}} illustrates the design and principle of these guidance signals.


\subsection{Random-subset Forgetting}
Under \method paradiagm, guidance signals leveraged by \signalopt of \textbf{S1} 
are specified as diverse target distributions associated with different forget sample categories;
while \compopt in \textbf{S2} performs fine-tuning with GD on $\mathcal{D}_r$.
Note that forget samples are uniformly scattered in feature space and share similarities with other retain data in this case.

\subsubsection{\signalopt under Target Distribution-based Signal}
The core idea of the target distribution signal $p_{\mathrm{target}}$ is to identify predictive distributions
that forget samples are expected to follow based on their location in the feature space of $f(\cdot; \Theta_o)$, 
and then align the model's prediction for the forget sample with the corresponding target distribution to achieve unlearning. 
Due to the varying strength of memorization of different samples, adopting a coarse-grained update strategy that treats all forgotten samples equally leads to a dilemma: some samples are under-unlearned, while others are over-unlearned (please refer to the detailed theoretical analysis in Appendix~\ref{app:A}).
Hence, we categorize forget samples into \textit{Normal} and \textit{Boundary} based on the model's confidence, 
which intuitively is inversely proportional to the distance between the sample and the decision hyperplane,
and design corresponding guidance signals considering their behavior under ideal unlearning.
Specifically, the overall schema is as follows.

\paragraph{Normal} They are high-confidence samples in the original model
and remembered by the model as a part of the training set, or perhaps they are overconfident. 
Consequently, these samples should be expected to undergo perturbation or a decrease in predicted confidence after unlearning,
while remaining similar to their neighbors. 
As a result, they should \emph{maintain isotropic or uniform randomness among neighbors} 
when no complete prior knowledge about their exact state after unlearning.
Therefore, to facilitate a smooth unlearn, 
we construct the corresponding target distribution for a forget sample $x_n$ as a linear combination of the original model prediction and a uniform distribution, i.e.,
\begin{equation}
p_{\mathrm{target}} = \alpha \cdot \mathrm{softmax}\bigl(f(x_{\mathrm{n}}; \Theta_o))\bigr) + (1-\alpha) \cdot \mathbf{1} / C,
\end{equation}
where $\mathbf{1}$ is a $C$-size all-one vector, which becomes a uniform distribution after normalization by $\frac{1}{C}$, and $\alpha \in [0,1]$ controls the forgetting strength. 
Such a guidance signal $p_{\mathrm{target}}$ steers the model to gradually reduce the confidence in the original predicted class for forget samples.

\paragraph{Boundary} They are relatively low-confidence samples in the original model 
and almost are close to the decision hyperplane.
Consequently, they will \emph{tend to cross decision boundaries after unlearning, leading to misclassification}. 
Accordingly, the corresponding guidance signal (target distribution) is designed as the predictive distribution 
of the most similar retained class in the feature space.

Here, we first define the target signal for each class $c \in \mathcal{Y}$, 
which takes the prediction distribution $P_c$ of the corresponding prototype $\mu_c$ as a proxy.
Specifically, we sample a subset $\mathcal{D}^c_P = \{(x, y) \in \mathcal{D}_r | y = c \}$ of size $R \ll |\mathcal{D}_r|$
and extract their feature by the extractor $\Phi(\cdot; \psi)$, the prototype is formulated as
$\mu_c=\frac{1}{R}\sum_{x \sim \mathcal{D}^c_P}\Phi(x; \psi)$, 
i.e., the center of class $c$ in feature space.

Therefore, the target distribution signal for any boundary sample $x_{\mathrm{b}}$ is $p_{\mathrm{target}} = P_{c^*}$ where
\begin{equation}
    c^* = \arg\min_{c \neq y_{\mathrm{true}}} \| \Phi(x_{\mathrm{b}}; \psi)) - \mu_c \|^2.
\end{equation}
Thus, it guides boundary samples to ``blend into'' semantically similar retain classes, 
achieving unlearning with minimal impact on the retain classes.



Relying on the above $p_{\mathrm{target}}$, we align the model's predictions on the forget samples 
with the target distribution signal by minimizing the KL divergence, i.e.,
\(\mathrm{KL}(p_{\mathrm{target}} \parallel p_{\mathrm{model}})\).
Normal and Boundary samples equally contribute to the total KL loss.

\subsubsection{\compopt for enhancement}
To address the potential degradation of the model's discriminative ability after \textbf{S1},
we can fine-tune on $\mathcal{D}_r$ (optionally only its subset) to enhance for a few epochs with a standard cross-entropy loss. 

\subsection{Class-wise Forgetting}
Under \method paradigm, the guidance signal in \textbf{S1} is based on feature redistribution. 
In \textbf{S2}, we apply cross-entropy loss on $\mathcal{D}_r$ to reinforce correct classification and 
negative cross-entropy loss on $\mathcal{D}_f$ to suppress prediction ability for forget classes, 
both with a layer-wise learning rate strategy. 
For this task, forget samples form tight clusters in feature space with clear inter-class decision boundaries.

\subsubsection{\signalopt under Feature Redistribution-based Signal}
The core idea of feature distribution signals is to reshape the distribution of forget samples 
in feature space according to their characteristics for thorough class-wise forgetting. 
As the empirical observation from retraining shows, 
\emph{the forget samples become loosely distributed and their intra-class compactness is disrupted after unlearning},
which serves as the prior knowledge for designing efficient unlearning methods.
Therefore, we construct two types of guidance signals: \textit{a dispersion loss} and \textit{an alignment loss},
to steer unlearning objective by working together. 

\paragraph{Intra-class dispersion loss:} 
$\mathcal{D}_f$ contains all training samples of a subset of classes $\mathcal{Y}_f \subseteq \mathcal{Y}$. 
We define the dispersion loss $L_{\mathrm{disp}}$ as the negative logarithm of the average pairwise distance between the normalized features from the same forget class, that is,
\begin{equation}
\begin{aligned}
L_{\mathrm{disp}}
=&
-\frac{1}{|\mathcal{Y}_f|}
\sum_{\mathcal{D}_f^c}
\log \Bigg(
\frac{1}{N_c(N_c-1)}
\sum_{\substack{x_i,x_j\in\mathcal D_f^c\\ i\neq j}}
\\
&
\left(
1-
\cos\left(
\frac{\Phi(x_i;\psi)}{\|\Phi(x_i;\psi)\|},
\frac{\Phi(x_j;\psi)}{\|\Phi(x_j;\psi)\|}
\right)
\right)
\Bigg),
\end{aligned}
\end{equation}

where $N_c = |\mathcal{D}_f^c|$ 
and $\cos(\cdot, \cdot)$ denotes cosine similarity. 
By maximizing the pairwise distances among intra-class features, 
$L_{\mathrm{disp}}$ forces the samples from forget class to scatter across the feature space, 
thereby disrupting their original clustering structure.

\paragraph{Alignment loss:} 

In addition to intra-class dispersion after unlearning, the forget samples as a whole 
tend to shift toward the global center of the retained classes.
Ignoring such an overall offset would weaken the forgetting performance where  
the scattered features still reside near their original region, which allows the classifier to make correct predictions through other means (e.g., nearest neighbor classification). 
Therefore, we introduce an alignment loss that drives the feature center of each forget class toward the retained region.

Here, we define the global center of the retained classes as 
$\bar{\mu}_r = \frac{1}{|\mathcal{D}_r|} \sum_{(x,y) \in \mathcal{D}_r} \Phi(x; \psi)$
and the normalized feature center for each forget class as 
$\mu_f^{c} = \frac{1}{N_c} \sum_{x \in \mathcal{D}_f^c} \frac{\Phi(x; \psi)}{\|\Phi(x; \psi)\|}$. 
The alignment loss is
\begin{equation}
L_{\mathrm{align}} = \frac{1}{|\mathcal{Y}_f|} \sum_{c \in \mathcal{Y}_f} \left( 1 - \cos\left(\mu_f^{c}, \bar{\mu}_r\right) \right),    
\end{equation}
where $\mathcal{Y}_r = \mathcal{Y} \setminus \mathcal{Y}_f$ is the set of retained classes.
Hence, the weighted sum of the dispersion loss and the alignment loss forms the guidance signal for unlearning.

\subsubsection{\compopt for enhancement}
In \textbf{S2}, we jointly optimize the feature extractor and classifier using the cross-entropy loss, reinforcing correct classification of retained samples while suppressing accurate predictions for forget samples.
We adopt a layer-wise learning rate strategy: a smaller learning rate is applied to the feature extractor to maintain the achieved feature reshaping effect, 
while a larger learning rate is applied to the classifier to adapt it to the new feature distribution and enhance its discriminative ability for the retained classes.

\section{Experiements}
\subsection{Experimental Settings}
\paragraph{Datasets} 
We conduct experiments on CIFAR-10 and CIFAR-100 \citep{krizhevsky2009learning}, Lacuna-10 (randomly select 10 classes with at least 500 samples from VGG-Faces \citep{DBLP:conf/sp/CaoY15}), and Tiny-ImageNet \citep{le2015tiny}, similar to previous work, to measure the unlearning performance for classification tasks. 

For \textit{random-subset forgetting}, we choose to forget $\approx 10\%$ of whole training data,i.e., $N_f = |\mathcal{D}_v| = 5,000$ for CIFAR-10 and CIFAR-100 and $N_f = 10k$ for Tine-ImageNet.
For \textit{class-wise forgetting}, we report the results about Lacuna-10 for forgetting a randomly selected class.

\paragraph{Models}
For Lacuna-10, we fine-tuned a pre-trained VGG-11 model using Adam with a fixed $lr =1E-3$ and batch size of $128$. 
For CIFAR-10, we fine-tuned a pre-trained ResNet18 model using Adam with a fixed $lr =1E-3$ and batch size of $256$. For CIFAR-100 and Tiny-ImageNet, we finetuned pre-trained VGG-16~\citep{DBLP:journals/corr/abs-2012-11849}, ResNet-50~\citep{DBLP:conf/cvpr/HeZRS16}, and ViT~\citep{DBLP:conf/iclr/DosovitskiyB0WZ21} models using Adam with a fixed $lr = 1E-4$ and batch-size of $256$. 
All methods are implemented in Python 3.11 with PyTorch. 
All experiments are performed on NVIDIA Tesla-A100 (80G) with Intel Xeon processors.

\paragraph{Baselines}
We covered \textbf{14 state-of-the-art baselines}: 
\textbf{Original}: the model trained on $\mathcal{D}$.
\textbf{Retrain}:retraining the model from scratch without $\mathcal{D}_f$(included as a reference but inviable in practice).
\textbf{Fine-tuning (FT)}, \textbf{Gradient Ascent (GA)}~\citep{DBLP:conf/cvpr/GolatkarAS20}, 
\textbf{Random Labels (RL)}~\citep{DBLP:conf/aaai/GravesNG21}, 
\textbf{Influence Unlearning (IU)}~\citep{DBLP:conf/icml/KohL17,DBLP:conf/aistats/IzzoSCZ21}, 
\textbf{EU-k}~\citep{DBLP:journals/corr/abs-2201-06640}, 
\textbf{$\ell_1$-sparse}~\citep{DBLP:conf/nips/JiaLRYLLSL23}, 
\textbf{SalUn}~\citep{DBLP:conf/iclr/FanLZ0W024}, 
\textbf{Bad-teaching (Bad-T)}~\citep{DBLP:conf/aaai/ChundawatTMK23} and 
\textbf{SCRUB}~\citep{DBLP:conf/nips/KurmanjiTHT23}. 
Besides, for the class-wise forgetting, we include specific methods: 
\textbf{Boundary Shrink (B-Shrink) \& Expanding (B-Expand)}~\citep{DBLP:conf/cvpr/ChenGL0W23} and 
\textbf{UNSIR}~\citep{DBLP:journals/tnn/TarunCMK24}.

Considering the comprehensive forgetting efficacy, model performance, and privacy protection, we follow the evaluation metrics below:
\begin{compactitem}
    \item{\textit{Accuracy:}} The accuracies on different datasets, including the retain training set ($\mathcal{D}_r$), retain test set ($\mathcal{D}_{rt}$), forget training set ($\mathcal{D}_f$), forget test set ($\mathcal{D}_{ft}$) and the full test set ($\mathcal{D}_t$);
    \item{\textit{Difference ($|\mathrm{Diff}|$):}} 
        The absolute accuracy difference between forget set and test set, i.e., $|\mathrm{Diff}| = |Acc(\mathcal{D}_f) - Acc(\mathcal{D}_t)|$, which measures whether the model still retains memory of the forgotten data after unlearning;
    \item{\textit{Acc-Index:}}  An integrated accuracy index for the random-subset unlearning task as 
            $F_{acc}(\mathcal{D}_t, \mathcal{D}_f)=Acc(\mathcal{D}_t) - |~\mathrm{Diff}~|$;
    \item{\textit{Time:}} The running time for performing mahcine unlearning method;
    \item{\textit{Speedup:}} The ratio of the time required for retraining on $\mathcal{D}_r$ to the unlearning method.
    \item{\textit{Attack Success Rate (ASR):}} The attack success rate of MIA, 
        measuring the model's effectiveness in privacy protection 
        with the ideal value being close to a random guess.
\end{compactitem}

\textit{Acc-Index} $F_{acc} \in [0, 1]$ accounts for both the generalization capacity and unlearning efficacy of the unlearned model, i.e., better accuracy on $\mathcal{D}_t$ while the accuracy of $\mathcal{D}_f$ should be as consistent with it as possible.
Detailed experimental settings and more results are deferred to Appendix~\ref{app:C} and~\ref{app:D}. 
The code is available at https://anonymous.4open.science/r/GSUO.

\subsection{Random-subset Forgetting}

\begin{table*}[t]
    \centering
    \caption{Performance comparison among baselines and \method 
    for the random-subset forgetting task on CIFAR-10 for models with different architectures.}
    \label{tab:compare_random_unlearn}
    
    \resizebox{\textwidth}{!}{
    \begin{tabular}{l|l|rr|rrrrrrrrr|r} \toprule
    \multicolumn{1}{c|}{\textbf{\begin{tabular}[c]{@{}c@{}}Dataset (model)\end{tabular}}} & \multicolumn{1}{c|}{\textbf{Metric}} & \multicolumn{1}{c}{\textbf{Original}} & \multicolumn{1}{c|}{\textbf{Retrain}} & \multicolumn{1}{c}{\textbf{FT}} & \multicolumn{1}{c}{\textbf{GA}} & \multicolumn{1}{c}{\textbf{RL}} & \multicolumn{1}{c}{\textbf{EU-k}} & \multicolumn{1}{c}{\textbf{IU}} & \multicolumn{1}{c}{\textbf{$\ell_1$-sparse}} & \multicolumn{1}{c}{\textbf{SalUn}} & \multicolumn{1}{c}{\textbf{Bad-T}} & \multicolumn{1}{c}{\textbf{SCRUB}} & \multicolumn{1}{|c}{\textbf{\method}} \\ \midrule
    \multirow{5}{*}{\begin{tabular}[c]{@{}c@{}}CIFAR-10\\ (ResNet-18)\end{tabular}} 
        & $Acc(\mathcal{D}_r)$ \, $\uparrow$ & 98.29  & 98.61  & 98.66 & 97.50 & 93.51 & \underline{98.90}  & 85.03  & 96.20  & 98.51 & 97.50  & 93.93  & \textbf{99.5} \\
        & $Acc(\mathcal{D}_t)$ \, $\uparrow$ & 80.80  & 80.61 & 81.18 & 79.54 & 76.19 & \underline{81.55} & 71.20  & 78.70   & 80.51   & 79.07   & 78.22   & \textbf{82.08} \\
        & $Acc(\mathcal{D}_f)$ \, & 98.34 & 81.24  & 96.95  & 96.32 & 93.64   & 98.72   & 85.44   & 86.52  & 92.26   & 73.64   & 89.48   & 82.25 \\
        & $|~\mathrm{Diff}~|$ \, $\downarrow$  & 17.54  & 0.63   & 15.77  & 16.78  & 17.45  & 17.17   & 14.24  & 7.82  & 11.75  & \underline{5.43}   & 11.26    & \textbf{0.17} \\
        & Time (s) \, $\downarrow$ & \textendash & 230.10  & 16.61   & \textbf{1.08}   & \underline{1.65}  & 85.28  & 41.07   & 65.98  & 86.95  & 12.34  & 11.93   & 7.06 \\ \midrule
   \multirow{5}{*}{\begin{tabular}[c]{@{}c@{}}CIFAR-100\\ (VGG-16)\end{tabular}} 
        & $Acc(\mathcal{D}_r)$ \, $\uparrow$ & 99.53 & 99.53 & 99.59 & 99.02 & 98.41 & \underline{99.88} & 99.53 & 99.85 & 99.53 & 99.41 & \textbf{99.95} & 87.02\\
        & $Acc(\mathcal{D}_t)$ \, $\uparrow$ & 60.35 & 58.63 & 59.81 & 58.80 & 57.61 & 61.15 & 60.35 & 60.63 & 60.11 & 58.81 & \textbf{61.48} & \underline{61.3}\\
        & $Acc(\mathcal{D}_f)$ \,  & 99.40 & 58.72 & 97.50 & 97.26 & 98.54 & 99.84 & 99.38 & 96.94 & 98.36 & 67.30 & 92.96 & 62.90\\
        & \multicolumn{1}{c|}{$|~\mathrm{Diff}~|$  \, \, $\downarrow$} & 39.05 & 0.09 & 37.69 & 38.46 & 40.93 & 38.69 & 39.03 & 36.3 & 38.25 & \underline{8.49} & 31.48 & \textbf{1.6}\\
        & Time (s) \, $\downarrow$ &\multicolumn{1}{c} \textendash & 366.09 & 139.61 & \textbf{1.895} & \underline{1.93} & 92.63 & 372.10 & 79.50 & 49.45 & 19.92 & 34.58 & 18.45\\ \midrule
    \multirow{5}{*}{\begin{tabular}[c]{@{}c@{}}CIFAR-100\\ (ViT)\end{tabular}} 
        & $Acc(\mathcal{D}_r)$ \, $\uparrow$ & 98.03 & 97.33 & \underline{99.45} & 89.76 & 97.87 & 97.84 & 97.80 & 96.74 & 99.24 & 97.53 & \textbf{99.79} & 99.04\\
        & $Acc(\mathcal{D}_t)$ \, $\uparrow$ & 88.96 & 90.20 & \underline{90.40} & 80.95 & 88.65 & 89.03 & 88.84 & 88.06 & 90.18 & 89.29 & \textbf{91.02} & 85.06\\
        & $Acc(\mathcal{D}_f)$ \, & 98.16 & 90.78 & 98.62 & 88.80 & 98.04 & 97.82 & 97.98 & 96.12 & 98.40 & 85.82 & 99.20 & 87.16\\
        & \multicolumn{1}{c|}{$|~\mathrm{Diff}~|$  \, \, $\downarrow$} & 9.20 & 0.58 & 8.22 & 7.85 & 9.39 & 8.79 & 9.14 & 8.06 & 8.22 & \underline{3.47} & 8.18 & \textbf{2.10}\\
        & Time (s) \, $\downarrow$ &\multicolumn{1}{c} \textendash & 420.53 & 334.91 & \textbf{23.65} & \underline{25.10} & 189.12 & 3287.35 & 125.43 & 140.18 & 397.08 & 739.71 & 161.45\\ \midrule
    \multirow{5}{*}{\begin{tabular}[c]{@{}c@{}}Tiny ImageNet\\ (ResNet50)\end{tabular}} 
        & $Acc(\mathcal{D}_r)$ \, $\uparrow$ & 99.39 & 99.94 & 99.74 & 98.79 & 83.81 & \textbf{99.96} & 99.39 & 99.56 & 98.98 & 98.31 & \underline{99.81} & 98.45 \\
        & $Acc(\mathcal{D}_t)$ \, $\uparrow$ & 78.74 & 78.34 & 77.51 & 78.14 & 66.67 & 78.57 & \underline{78.75} & 77.69 & 77.54 & 71.75 & \textbf{78.90} & 78.65  \\
        & $Acc(\mathcal{D}_f)$ \, & 99.36 & 78.39 & 97.90 & 97.74 & 83.48 & 99.38 & 99.35 & 98.74 & 97.30 & 78.84 & 99.11 & 77.96 \\
        & $|~\mathrm{Diff}~|$ \, $\downarrow$ & 20.62 & 0.05  & 20.39 & 19.60 & 16.81 & 20.81 & 20.60 & 21.05 & 19.76 & \underline{7.09} & 20.21 & \textbf{0.69} \\
        & Time (s) \, $\downarrow$ & \multicolumn{1}{c}{\textendash} & 1027.23 & 354.36 & \underline{12.52
        }& \textbf{12.32} & 396.83 & 52.84 & 83.00 & 93.60 & 482.42 & 988.39 & 148.6 \\ \bottomrule
    \end{tabular}
    }
\end{table*}

    
    

For random-subset forgetting, 
Table~\ref{tab:compare_random_unlearn} reports the utility and efficiency of different unlearning methods. 
Figure~\ref{fig:Facc_vs_Speedup}(a)-(b) shows Speedup versus $F_{acc}$ for ResNet-18 on CIFAR-10 and VGG-16 on CIFAR-100, with additional results in Appendix~\ref{app:D}.
We evaluate privacy protection by adopting the U-LiRA~\citep{DBLP:conf/satml/HayesSTKP25} MIA based on the log-likelihood ratio,
which is considered the SOTA per-sample attack benchmark for unlearning scenarios to date. 
The detailed attack procedure is provided in Appendix~\ref{app:C}. 
Figure~\ref{fig:Facc_vs_Speedup}(c) shows MIA results and AUC score for different unlearning methods.

\method achieves the best overall performance.
We desire to achieve \(|\mathrm{Diff}|\) as small as possible. Results show that \method consistently produces the lowest \(|\mathrm{Diff}|\) across all architectures and datasets, significantly outperforming existing baselines.
A lower \(|\mathrm{Diff}|\) indicates that the model's predictions on the forget set more closely match those on an unseen test set. 
Ideally, a perfectly unlearned model behaves as if it never encountered the forget samples during training. 
Our experiments  consistently confirm that \method approaches this ideal unlearning state more closely than all other evaluated methods.
We expect \( Acc(\mathcal{D}_r) \) and \( Acc(\mathcal{D}_t) \) to remain high after unlearning. 
\method satisfies this expectation across different architectures. 
On CIFAR-10 with ResNet-18, \method achieves higher retain and test accuracy than all baselines. 
Although \method's \( Acc(\mathcal{D}_r) \) on CIFAR-100 with VGG-16 is slightly lower than than SCRUB and EU-k, \( Acc(\mathcal{D}_t) \) is by no means inferior.
This indicates that \method does not simply memorize retain samples to achieve high retain accuracy, but instead maintains retain performance while exhibiting stronger generalization. 
SCRUB also performs well in this regard, 
but its high \( Acc(\mathcal{D}_f) \) indicates incomplete forget. 
In contrast, \method maintains high retain and test accuracy while effectively reducing prediction accuracy on the forget set, thereby achieving a better balance between selective forgetting and model utility.

\begin{figure*}[t]
    \centering
    \includegraphics[width=0.9\linewidth]{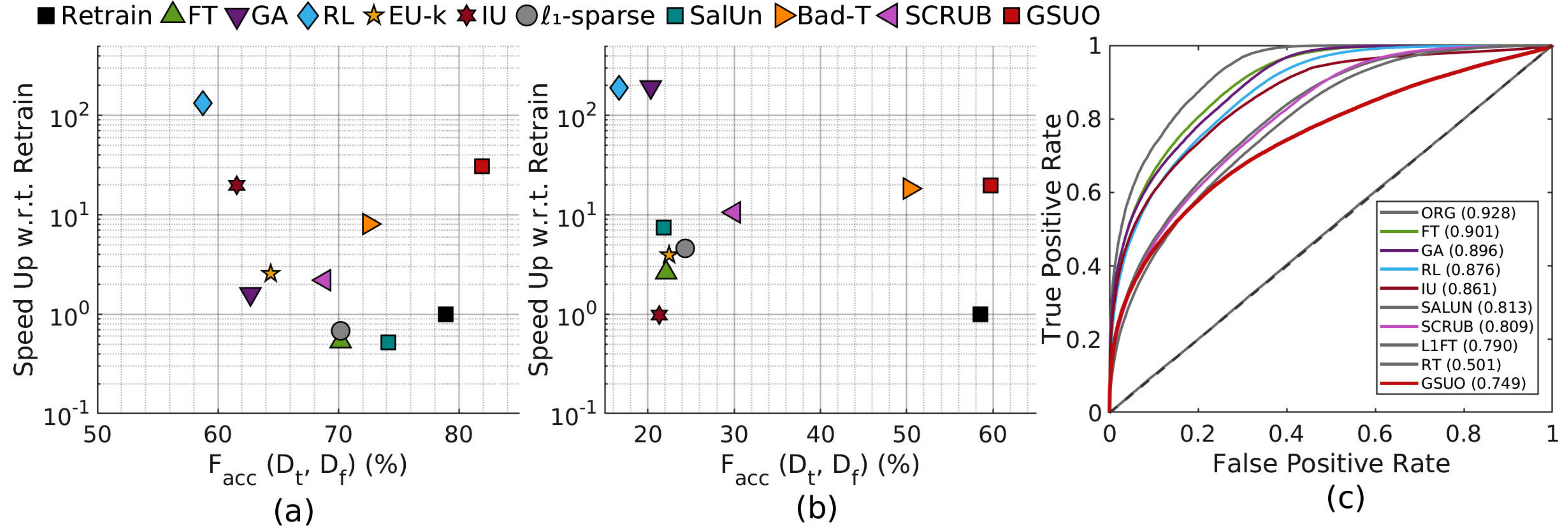}
    \caption{(a)-(b) Speedup versus $F_{acc}$ for ResNet-18 on CIFAR-10 and VGG-16 on CIFAR-100; 
        (c) MIA (U-LiRA) results and AUC score for different unlearning methods on ResNet-18 on CIFAR-10.}
    \label{fig:Facc_vs_Speedup}
\end{figure*}


For comprehensive evaluation, we adopt the metric \( F_{acc} \) to balance the model's generalization and its forgetting effectiveness on the forget set. 
We also measure computational efficiency via the speedup ratio relative to Retrain. As shown in Figure~\ref{fig:Facc_vs_Speedup}, \method consistently achieves a favorable trade-off between unlearning quality and computational efficiency (occupying the top-right region, i.e., the optimal zone with high $F_{acc}$ and high speedup). 
Although methods such as GA and RL are faster on some datasets according to Table~\ref{tab:compare_random_unlearn}, their forgetting effectiveness is far inferior to \method. This demonstrates that \method effectively balances efficiency and forgetting quality, delivering strong and robust unlearning performance without sacrificing computational practicality across diverse settings.

    

We further consider a privacy-critical application: deleting data of users who exercise their ``right to be forgotten''. To this end, we adopt U-LiRA as the privacy evaluation metric, aiming to ensure that after unlearning, an attacker should be unable to distinguish between forgotten samples and truly unseen samples, thereby protecting the privacy of users who request deletion. 
Figure~\ref{fig:Facc_vs_Speedup} shows the U-LiRA results, where \method achieves the best performance, substantially outperforming all baselines. 
This result demonstrates that \method effectively mitigates membership inference attacks on forgotten samples, offering strong privacy guarantees while maintaining competitive model utility.


\method demonstrates strong scalability and robustness as data and model scale increase. 
As forgetting tasks scale from smaller datasets with fewer classes to larger, more complex ones with more categories and samples, the forgetting exactness \(|\mathrm{Diff}|\) of \method remains consistently low, substantially outperforming all baselines. 
At the same time, it maintains competitive computational efficiency. These results verify the stability and effectiveness of \method in complex, large-scale forgetting tasks.


\subsection{Class-wise Forgetting}
\begin{table*}[t]

    \centering
    \caption{Comparison of MU methods for class-wise forgetting on Lacuna-10 under VGG-11. }
    \label{tab:compare_class_unlearn}
    
    \resizebox{0.8\linewidth}{!}{%
    \begin{tabular}{lrrrrr}
    \toprule
    \textbf{Method} & \textbf{$Acc(\mathcal{D}_r)$} \, $\uparrow$ & \textbf{$Acc(\mathcal{D}_{rt}$)} \, $\uparrow$ & \textbf{$Acc(\mathcal{D}_{f})$} \, $\downarrow$ & \textbf{$Acc(\mathcal{D}_{ft})$} \, $\downarrow$ & \textbf{Time (s)} \, $\downarrow$\\
    \midrule
    Original   & 100.00 & 87.67 & 99.74& 90.00& --- \\
    Retrain    & 99.94 & 82.67 & 0.00  & 0.00  & 396.76\\
    \midrule
    FT         & 98.43 & 86.78 & \textbf{0.00}  & \textbf{0.00}  & 50.98\\
    GA         & 88.09 & 76.67 & 0.80 & 1.00 & 49.03 \\
    RL         & 97.03 & 87.78 & 1.21 & 1.00 & 60.69 \\
    EU-k       & 98.95 & 88.22 & \textbf{0.00}  & \textbf{0.00}  & 42.37 \\
    IU         & 89.11 & 79.67 & 12.68  & 5.00  & \textbf{1.98} \\
    $\ell_1$-sparse & 82.78 & 76.00 & \textbf{0.00} & \textbf{0.00} & 51.85 \\
    SalUn      & 98.72 & 87.56 & \textbf{0.00} & \textbf{0.00} & 105.67 \\
    Bad-T & 93.59 & 81.78 & 82.70 & 67.00 & 87.13 \\
    SCRUB      & \underline{98.92} & \textbf{89.78} & 0.80  & 0.01  & 90.84 \\
    B-Shrink   & 96.39 & 85.11 & \textbf{0.00}  & \textbf{0.00}  & \underline{5.96} \\
    B-Expand   & 95.57 & 85.44 & \textbf{0.00}  & \textbf{0.00}  & 10.72 \\
    UNSIR      & 83.95 & 77.44 & \textbf{0.00}  & \textbf{0.00}  & 25.52 \\
    \midrule
    \textbf{\method} & \textbf{99.21} & \underline{89.11} & \textbf{0.00} & \textbf{0.00} & 35.59 \\
    \bottomrule
    \end{tabular}%
    }
\end{table*}

For class-wise forgetting, we evaluated the effectiveness of different unlearning methods on forgetting all samples of a randomly selected class (class 4) on the Lacuna-10 dataset. Table~\ref{tab:compare_class_unlearn} reports the corresponding effectiveness and efficiency of these methods.



We desire the training and test accuracy on the forget set to be as small as possible. 
\method achieves the optimal level on both metrics, completely removing the influence of the forgotten class from the model.
This includes eliminating predictions on both the forget set and the corresponding test samples, demonstrating that \method can fully erase the target class. 
Despite achieving perfect forgetting, \method simultaneously preserves strong performance on retain data. Specifically, it achieves the highest retain set accuracy among all methods, and its performance on the retain test set is also among the best, ranking second overall. 
While several baselines (e.g., FT, EU-k, SalUn) also achieve zero accuracy on the forget set, they either  fall short of  \method on retain set accuracy or exhibit higher computational cost. Methods such as GA, RL, and SCRUB fail to achieve complete forgetting, leaving non-zero accuracy on the forget set.

\method achieves a substantial speedup compared to retraining, 
outperforming most baselines. Although a few methods are faster, they compromise either forgetting completeness or model utility. Overall, \method strikes a strong trade-off among forgetting effectiveness, model utility, and computational efficiency, making it a practical and reliable solution for selective class removal.



\subsection{Ablation Study}
\begin{table*}[t]
    \centering
    \caption{Ablation study on \method framework}
    \label{tab:ablation}
    \resizebox{0.9\linewidth}{!}{
    \begin{tabular}{lccccccc}
        \toprule
        \multirow{2}{*}{} & \multicolumn{4}{c}{Random-subset Forgetting} & \multicolumn{3}{c}{Class-wise Forgetting} \\
        \cmidrule(lr){2-5} \cmidrule(lr){6-8}
                                       & \textbf{$Acc(\mathcal{D}_{r})$} & \textbf{$Acc(\mathcal{D}_{f})$} & \textbf{$Acc(\mathcal{D}_{t})$} & \textbf{$|~\mathrm{Diff}~|$} & \textbf{$Acc(\mathcal{D}_{rt})$} & \textbf{$Acc(\mathcal{D}_{ft})$} & \textbf{$Acc(\mathcal{D}_{t})$} \\ \midrule
        Oringinal       &   97.27       &   97.38       &  80.87 & 16.51       &    87.67      &    90.00      &  87.90        \\
        w/o \signalopt               &  99.92        &    99.46      &  83.17   & 16.29     &    87.89      &      0.00    &    79.10      \\
        w/o \compopt               &    88.64      &   78.54       &      73.43  & 5.11 &   84.56       &    6.00      &    76.70      \\
        \method                  &   99.5       &   82.25       &    82.08 & 0.17     &   89.11       &    0.00      &   80.20       \\ \bottomrule
    \end{tabular}
    }
\end{table*}

To verify the necessity of \signalopt~and \compopt, we conduct ablation studies by removing each component individually. The results are shown in Table~\ref{tab:ablation}.



For random-subset forgetting, we observe that removing either \signalopt or \compopt leads to notable performance degradation. 
Specifically, without \signalopt, the model retains excessive information on the forget set; without \compopt, test accuracy drops significantly, with \(|\mathrm{Diff}|\) remaining large in both cases.
For class-wise forgetting, we observe similar trends: removing either component results in lower retain test set accuracy compared to the full \method. We conclude that \signalopt is essential for eliminating the influence of forgotten samples, while \compopt plays a critical role in preserving model generalization.
The best forgetting performance across both tasks is achieved only when both components are present, validating the necessity of both stages in \method.

\begin{table}[t]
    \centering
    \caption{Ablation study on \signalopt}
    \label{tab:ablation2}
    \resizebox{1\linewidth}{!}{%
    \begin{tabular}{clcccccc}
        \toprule
        Task & Method & $Acc(\mathcal{D}_r)$ & $Acc(\mathcal{D}_{rt})$ & $Acc(\mathcal{D}_f)$ & $Acc(\mathcal{D}_{ft})$ & $Acc(\mathcal{D}_t)$ & $|\mathrm{Diff}|$\\
        \midrule
        \multirow{4}{*}{Random-subset} & Original & 97.27 & --- & 97.38 & --- & 80.87 & 16.51\\
                                                & w/o Boundary & 99.6 & --- & 95.62 & --- & 82.59 & 13.03\\
                                                   & w/o Normal  & 99.92 & --- & 98.8 & --- & 83.73 & 15.07\\
                                                   & \method~ & 99.5 & --- & 82.25 & --- & 82.08 & 0.17\\
        \cmidrule{1-8}
        \multirow{4}{*}{Class-wise}     & Original & 100.00 & 87.67 & 99.74 & 90.00 & 87.90 & --- \\
                                                & w/o Dispersion & 98.75 & 88.22 & 0.00 & 0.00 & 88.22 & ---\\
                                                   & w/o Alignment  & 96.45 & 84.78 & 0.00 & 0.00 & 84.78 & --- \\
                                                   & \method~ & 99.21 & 89.11 & 0.00 & 0.00 & 89.11 & --- \\
        \bottomrule
    \end{tabular}%
    }
\end{table}

To further investigate the contribution of each component in \signalopt, 
we conduct ablation experiments under two forgetting tasks. 
The results are shown in Table~\ref{tab:ablation2}. 

For Random-subset forgetting, we observe that treating all forget samples uniformly (as either all Normal or all Boundary) results in limited forgetting performance, with forget set accuracy remaining high in both cases. This finding suggests that Random-subset forgetting requires a more nuanced approach: different types of forget samples need distinct target distributions to achieve effective unlearning. In contrast, the complete \method substantially reduces forget set accuracy while keeping retain set accuracy high, achieving the optimal balance we desire. 
For the class-wise forgetting task, we observe that removing either the dispersion loss or the alignment loss still allows the model to achieve zero accuracy on the forget set. However, retain set accuracy drops noticeably compared to the full \method. This indicates that even when forget set accuracy has already reached the optimal level, both the dispersion loss and the alignment loss still help preserve model performance on the retain data. 
We therefore conclude that each component in \compopt plays a distinct yet complementary role, and only their full combination achieves the optimal trade-off between forgetting effectiveness and model utility preservation across both forgetting tasks.

\subsection{Sensitivity Analysis on Forget Set Size}

Table~\ref{tab:forgetting_ratios} shows that \method achieves a favorable trade-off between utility, forgetting efficacy, and efficiency. 
As the forgetting ratio increases from 0\% to 40\%, \method maintains high accuracy on the retain set \(\mathcal{D}_r\) , with only a 2\% drop at 40\% forgetting. 
This slight drop is reasonable, as even retraining from scratch inevitably incurs a certain degree of accuracy degradation when the size of the retain set decreases. The test set \(\mathcal{D}_t\) accuracy declines gradually from 80.80\% to 79.92\%, indicating that the model generalizes well to the overall data distribution after forgetting targeted classes. The accuracy on the forget set \(\mathcal{D}_f\) is effectively suppressed across different forgetting ratios. 
Furthermore, even at a forgetting ratio of 40\%, the \(|\mathrm{Diff}|\) metric remains below 3\%, which is even better than the performance of most existing unlearning methods at lower forgetting ratios, fully demonstrating that \method achieves efficient and stable Random-subset forgetting.
The runtime of SGA-OT increases modestly with the forgetting ratio and remains significantly lower than the cost of full retraining, confirming the practicality and scalability of our method for large-scale unlearning scenarios.

\begin{table}[t]
\centering
\caption{Performance of \method~across different forgetting ratios.}
\label{tab:forgetting_ratios}
\resizebox{1\linewidth}{!}{
\begin{tabular}{lccccccccc}
\toprule
Metrics & 0 & 5\% & 10\% & 15\% & 20\% & 25\% & 30\% & 35\% & 40\% \\
\midrule
$Acc(\mathcal{D}_r)$ \, $\uparrow$ & 92.90 & 99.51 & 99.50 & 99.00 & 98.79 & 98.23 & 98.01 & 98.05 & 97.65 \\
$Acc(\mathcal{D}_t)$ \, $\uparrow$ & 80.80 & 82.24 & 82.08 & 81.56 & 81.57 & 81.00 & 80.81 & 80.19 & 79.92 \\
$Acc(\mathcal{D}_f)$ \, & 98.34 & 81.44 & 82.25 & 81.57 & 82.88 & 84.78 & 81.34 & 82.83 & 82.62 \\
$|\,\mathrm{Diff}\,|$ \, $\downarrow$ & 17.54 & 0.8 & 0.17 & 0.01 & 1.31 & 3.78 & 0.53 & 2.64 & 2.70 \\
Time (s) \, $\downarrow$ &\multicolumn{1}{c}{\textendash} &6.6 &7.06 &8.81 &8.73 &11.97 &12.66 &11.81 &12.21 \\
\bottomrule
\end{tabular}}
\end{table}

\section{Conclusion}
We propose \method, a two-step unlearning framework consisting of Signal-Guided Optimization and Compensatory Optimization. 
By leveraging task-specific guidance signals and targeted utility restoration, \method provides a principled and efficient solution to the challenges of over-unlearning and under-unlearning in both random-subset and class-wise forgetting scenarios.
Extensive experiments demonstrate that \method consistently surpasses 14 state-of-the-art baselines across unlearning efficacy, generalization, and privacy robustness, while delivering substantial speedups, confirming its effectiveness for trustworthy machine unlearning.

\textbf{Limitation.}
\method outperforms baselines in privacy evaluation, but still falls short of the retraining upper bound. Experiments are limited to traditional models; its effectiveness on emerging architectures such as LLMs and VLMs remains to be validated.



\bibliographystyle{icml2026}
\bibliography{main}

\clearpage

\appendix
\section*{Appendix}
\section{Theoretical Analysis}
\label{app:A}

\paragraph{Notation.}



The original model \( \Theta_o \) is trained on the full dataset \( D \). 
We partition \( D \) into a retain set \( D_r \) and a forget set \( D_f \), 
where \( D_f = D_f^b \cup D_f^n \), with:

\begin{itemize}
    \item \textbf{\( D_f^b \) (Boundary samples)}: 
    The absolute logit value is small (\( |f_y(x_b; \Theta_o)| \leq \varepsilon \)), 
    which indicates low model confidence (high uncertainty). 
    These samples lie near the decision boundary, 
    and consequently their loss gradient norm \( \|\nabla_\theta \ell(z_b)\| \) is large.

    \item \textbf{\( D_f^n \) (Normal samples)}: 
    The absolute logit value is large (\( |f_y(x_n; \Theta_o)| \geq L \)), 
    which indicates high model confidence (low uncertainty). 
    These samples lie far from the decision boundary, 
    and consequently their loss gradient norm \( \|\nabla_\theta \ell(z_n)\| \) is small.
\end{itemize}

\noindent
\textit{Remark.} For cross-entropy loss, a small absolute logit (near the decision boundary) implies low confidence, high uncertainty, and a large gradient norm, and vice versa.

\paragraph{Definition.}
\label{def:mem_strength}
For a sample \( z = (x, y) \) under the original model \( \Theta_o \), the strength of memorization \( M(z) \) is defined as the magnitude of parameter change required after removing this sample from the training set. This concept can be quantified using the influence function. Let
\begin{equation}
H_{\Theta_o} = \frac{1}{|D|} \sum_{z \in D} \nabla^2_\Theta \ell(z; \Theta) \bigg|_{\Theta = \Theta_o}
\end{equation}
be the empirical Hessian matrix, where \( \ell \) is the cross-entropy loss. The influence function of sample \( z \) is defined as:
\begin{equation}
\mathcal{I}(z) = - H_{\Theta_o}^{-1} \nabla_\Theta \ell(z; \Theta_o).
\end{equation}
The strength of memorization is defined as the norm of the influence function:
\begin{equation}
M(z) = \|\mathcal{I}(z)\|_2 \propto \|\nabla_\Theta \ell(z; \Theta_o)\|_{H^{-1}}.
\end{equation}
As a first-order approximation, the strength of memorization correlates positively with the gradient norm:
\begin{equation}
M(z) \propto \|\nabla_\Theta \ell(z; \Theta_o)\|_2.
\end{equation}

\begin{theorem}[Memorization Strength Disparity]
\label{thm:1}
    Let \( z_b \in D_f^b \) be a boundary sample and \( z_n \in D_f^n \) be a normal sample. Under the original model \( \Theta_o \), the memorization strength of boundary samples is significantly larger than that of normal samples, i.e.,
    \[
    M(z_b) \gg M(z_n).
    \]
\end{theorem}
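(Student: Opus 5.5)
The plan is to reduce the comparison of memorization strengths to a comparison of gradient norms, and then bound those gradient norms through the logit margin via a chain-rule factorization.

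\textbf{Step 1 (reduction to gradient norms).} Using the Definition, $M(z)=\|H_{\Theta_o}^{-1}\nabla_\Theta\ell(z;\Theta_o)\|_2$. Assume $H_{\Theta_o}$ is positive definite with eigenvalues in $[\lambda_{\min},\lambda_{\max}]$, e.g.\ after the usual damping. Then
\[
\lambda_{\max}^{-1}\|\nabla_\Theta\ell(z)\|_2\le M(z)\le \lambda_{\min}^{-1}\|\nabla_\Theta\ell(z)\|_2 .
\]
It therefore suffices to show that $\|\nabla_\Theta\ell(z_b)\|_2/\|\nabla_\Theta\ell(z_n)\|_2$ exceeds the condition number $\kappa=\lambda_{\max}/\lambda_{\min}$ by a large factor.

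\textbf{Step 2 (chain rule through the logits).} With $p=\mathrm{softmax}(f(x;\Theta_o))$, we have
\[
\nabla_\Theta\ell(z)=J_\Theta(x)^\top(p-e_y),
\]
where $J_\Theta(x)=\partial f(x;\Theta)/\partial\Theta$. Assume the Jacobians are uniformly bounded: $0<\sigma_J\le\sigma_{\min}(J_\Theta(x)^\top)$ on the relevant subspace and $\|J_\Theta(x)\|\le G$ for all $x$. Then
\[
\sigma_J\|p-e_y\|\le\|\nabla_\Theta\ell(z)\|\le G\|p-e_y\|,
\]
and the problem reduces to comparing the residuals $\|p-e_y\|$.

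\textbf{Step 3 (residual bounds from the margin).} I will interpret the logit $f_y$ in the partition as the margin $m(x)=f_y(x)-\max_{k\ne y}f_k(x)$; in the binary case this is exactly the signed logit.
\begin{itemize}
\item For a normal sample with $m\ge L$: $1-p_y\le (C-1)e^{-L}$. Since $\|p-e_y\|\le\sqrt2\,(1-p_y)$, this gives $\|p-e_y\|\le\sqrt2\,(C-1)e^{-L}$.
\item For a boundary sample with $|m|\le\varepsilon$: $p_y\le e^{\varepsilon}/(1+e^{\varepsilon})$, so $\|p-e_y\|\ge 1-p_y\ge 1/(1+e^{\varepsilon})$.
\end{itemize}

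\textbf{Step 4 (combining the bounds).} Putting the three steps together,
\[
\frac{M(z_b)}{M(z_n)}\ \ge\ \frac{\sigma_J}{\sqrt2\,G\,\kappa\,(C-1)}\cdot\frac{e^{L}}{1+e^{\varepsilon}} .
\]
This ratio grows exponentially in $L-\varepsilon$. Hence $M(z_b)\gg M(z_n)$ whenever $L-\varepsilon\gg\log\bigl(\kappa G(C-1)/\sigma_J\bigr)$, which is the precise meaning I would give to ``$\gg$''.

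\textbf{Main obstacle.} The hard part is Step 2's lower bound: $J_\Theta^\top$ could annihilate $p-e_y$. I would try to handle this by applying the chain rule only to the last-layer parameters $\omega$. There, $\nabla_\omega\ell=(p-e_y)\Phi(x)^\top$, whose norm is exactly $\|p-e_y\|\,\|\Phi(x)\|$. The full gradient norm is at least this, so it only remains to assume a feature-norm ratio bounded away from zero, which is a much milder assumption. The Hessian conditioning in Step 1 is the other assumption that must be stated explicitly.
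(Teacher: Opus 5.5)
Your argument is correct under the assumptions you state. It follows the same two-link skeleton as the paper's proof: memorization $\to$ gradient norm $\to$ confidence. The paper, however, proves neither link. It takes $M(z)\propto\|\nabla_\Theta\ell(z;\Theta_o)\|_2$ as a ``first-order approximation'' built into the Definition. It takes ``small $|f_y|$ gives a large gradient norm, large $|f_y|$ a small one'' from the Notation that defines $D_f^b$ and $D_f^n$. Its proof therefore just chains these assertions, and ``positively correlated'' alone cannot produce $\gg$.

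You turn each link into an inequality: the spectral sandwich for $\|H_{\Theta_o}^{-1}g\|_2$, the factorization $\nabla_\Theta\ell=J_\Theta^\top(p-e_y)$, and explicit softmax residual bounds. This gives $\gg$ a precise meaning: a ratio exponential in $L-\varepsilon$, with the required gap growing only logarithmically in $\kappa G(C-1)/\sigma_J$. It also surfaces hypotheses the paper hides. Without spectral control, $H_{\Theta_o}^{-1}$ can amplify $g_n$ and shrink $g_b$. Under the literal partition by the $y$-th logit the statement is false: for instance, $f_y=0$ with all other logits very negative gives $p_y\approx 1$ and a tiny gradient. So your margin reinterpretation is necessary, not cosmetic.

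Two small points should be made explicit. First, for normal samples you silently use $m\ge L$ instead of $|m|\ge L$. This change is required, since a confidently misclassified sample ($m\le -L$) has $1-p_y\approx 1$ and hence a large gradient; for boundary samples, only $m\le\varepsilon$ is actually needed. Second, in your last-layer fix the upper bound at $x_n$ still uses the full-Jacobian constant $G$. The assumption that remains is therefore a lower bound on $\|\Phi(x_b)\|/G$, not merely a ratio of feature norms.
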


\begin{proof}
    From the notion, a boundary sample \( z_b \) lies near the decision boundary, having a small absolute logit, which implies low model confidence and high uncertainty. Consequently, its loss gradient norm \( \|\nabla_\Theta \ell(z_b; \Theta_o)\|_2 \) is large. In contrast, a normal sample \( z_n \) lies far from the decision boundary, having a large absolute logit, which implies high model confidence and low uncertainty. Consequently, its loss gradient norm \( \|\nabla_\Theta \ell(z_n; \Theta_o)\|_2 \) is small.
    
    By Definition \ref{def:mem_strength}, the strength of memorization \( M(z) \) is positively correlated with the loss gradient norm. Therefore,
    \begin{equation}
    M(z_b) \gg M(z_n).
    \end{equation}
\end{proof}

\begin{theorem}[Coarse-Grained Forgetting Dilemma]
\label{thm:2}
    When parameter updates are applied in a coarse-grained and undifferentiated manner across all samples, a fundamental dilemma emerges: samples with strong memorization (boundary samples) are left in an under-unlearned state, while samples with weak memorization (normal samples) are left in an over-unlearned state.
\end{theorem}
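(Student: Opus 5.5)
We prove the dilemma in a linearized, first-order setting around \(\Theta_o\), using Theorem~\ref{thm:1} as the main input. First we formalize "coarse-grained and undifferentiated": every forget sample \(z\in D_f\) receives the same unlearning budget. Concretely, the per-sample parameter displacement attributable to \(z\) is \(\Delta_z = \eta\, u_z\) with a common scale \(\eta>0\). For example, a single gradient-ascent step of common size, or a uniform relabeling objective, gives per-sample updates of comparable magnitude \(\|\Delta_z\|\approx \eta\) independent of \(z\).

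Next we define the ideal per-sample removal. By the influence-function approximation in Definition~\ref{def:mem_strength}, deleting \(z\) shifts the parameters by approximately \(\frac{1}{|D|}\mathcal{I}(z)\), whose magnitude is proportional to \(M(z)\). We call \(z\) \emph{under-unlearned} if the applied displacement along the influence direction is smaller than \(\|\mathcal{I}(z)\|/|D|\). We call it \emph{over-unlearned} if the applied displacement exceeds this amount, so that the loss on \(z\) (and on its retained neighbours) moves beyond the retrain value. Under a local quadratic model of the loss, the mismatch \(\|\Delta_z - \mathcal{I}(z)/|D|\|_{H}\) then measures the residual memorization or the excess damage.

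The core step is a pigeonhole-type argument. A single scale \(\eta\) must satisfy \(\eta \ge c\,M(z_b)\) to fully remove a boundary sample, and \(\eta \le c\,M(z_n)\) to avoid overshooting a normal sample. By Theorem~\ref{thm:1}, \(M(z_b)\gg M(z_n)\), so these two requirements are incompatible. Any admissible \(\eta\) therefore falls into one of three cases:
\begin{itemize}
\item \(\eta\ll M(z_b)\): boundary samples are under-unlearned.
\item \(\eta\gg M(z_n)\): normal samples are over-unlearned.
\item \(\eta\) lies in the gap between the two thresholds: both failures occur simultaneously.
\end{itemize}
Because the gap is large, any \(\eta\) chosen to balance aggregate loss over \(D_f\) lands in the interior of this gap. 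This gives the stated dilemma, with boundary samples under-unlearned and normal samples over-unlearned.

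The main obstacle is making "undifferentiated update" precise enough that it really yields equal per-sample budgets. The raw gradient-ascent update is itself proportional to \(\nabla\ell(z)\), so it is not literally uniform. I would handle this by restricting to normalized or label-level schemes such as random labels, uniform targets, or a fixed number of steps with clipped per-sample gradients. The formal statement would then be made only for this class, with the remaining argument relying on the informal \(\propto\) relations already adopted in the proof of Theorem~\ref{thm:1}.
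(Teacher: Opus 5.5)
Your argument is a valid alternative at the paper's level of rigor, but it puts the dilemma in a different place than the paper does.

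\emph{Your route.} You formalize ``undifferentiated'' as an equal per-sample budget \(\eta\). The dilemma then comes from a pigeonhole on one scalar against the heterogeneous required displacements \(\|\mathcal{I}(z)\|/|D|\).

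\emph{The paper's route.} It analyzes the literal averaged gradient-ascent step \(\Delta\Theta=\eta\,\frac{1}{|D_f|}\sum_{z\in D_f}\nabla_\Theta\ell(z;\Theta_o)\). By Theorem~\ref{thm:1} this step is dominated by \(g_b\). Boundary samples stay under-unlearned because the averaged direction need not align with any individual \(z_b\), and \(\eta\) is capped by the retain set. Normal samples become over-unlearned through cross-sample interference: the shared \(g_b\)-driven displacement pushes them out of their flat basins even though their own gradients are nearly zero.

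\emph{What your route gains.} It gives a clean, quantitative notion of under- and over-unlearning tied to the influence-function ideal. It also fits label-level schemes naturally, such as random labels or uniform-mixture targets.

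\emph{What it costs.} First, it does not reach the paper's case. Under the paper's approximation \(M(z)\propto\|\nabla_\Theta\ell(z;\Theta_o)\|_2\), each sample's own contribution to plain gradient ascent is already proportional to its memorization. A pure magnitude-mismatch argument therefore predicts no dilemma for GA; the interference mechanism is exactly what the paper needs there.

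Second, you let each \(\Delta_z\) act only on its own sample, whereas every sample sees the single sum \(\sum_z\Delta_z\). If boundary and normal influence directions were strongly aligned, setting \(\eta\) to the mean required displacement would reproduce the ideal total, and the dilemma would vanish. You are therefore tacitly assuming nearly non-interacting influence directions.

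Third, your three cases are not a trichotomy. The first two each yield only one failure, so the pigeonhole alone shows only that no \(\eta\) avoids failure altogether. The simultaneous failure asserted in the statement rests on your extra claim that \(\eta\) lands inside the gap. You can make this concrete, for example, with the least-squares choice \(\eta=c\,\overline{M}\). In your two-value model this lies strictly between \(cM(z_n)\) and \(cM(z_b)\) as soon as both groups are nonempty.

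The paper's argument gives both failures at once for every admissible small \(\eta\). The price is unquantified assertions about direction bias and flat-basin radii.
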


\begin{proof}
    Consider the coarse-grained forgetting update:
    \begin{equation}
    \Delta \Theta = \eta \cdot \frac{1}{|D_f|} \sum_{z \in D_f} \nabla_\Theta \ell(z; \Theta_o), \quad \eta > 0,
    \end{equation}
    where $\Delta \Theta = \Theta_u - \Theta_o$. Partition the forget set $D_f$ into boundary samples $D_f^b$ and normal samples $D_f^n$. Define the average gradients:
    \begin{equation}
    g_b = \frac{1}{|D_f^b|} \sum_{z_b \in D_f^b} \nabla_\Theta \ell(z_b), \qquad
    g_n = \frac{1}{|D_f^n|} \sum_{z_n \in D_f^n} \nabla_\Theta \ell(z_n).
    \end{equation}
    By Theorem~\ref{thm:1}, $\|g_b\| \gg \|g_n\|$, with boundary samples lying in high-curvature regions and normal samples in low-curvature (saturated) regions. Thus the overall update direction is dominated by $g_b$:
    \begin{equation}
    \Delta \Theta \approx \eta \cdot \frac{|D_f^b|}{|D_f|} g_b.
    \end{equation}
    Boundary samples reside in high-curvature regions where the loss is highly sensitive to parameter changes and gradients are steep. Ideal forgetting requires moving parameters along the \textit{individual gradient direction} of each sample by a moderate step size. However:

    \begin{itemize}
        \item \textbf{Direction bias}: The coarse-grained update uses the \textit{average gradient} $g_b$ over all boundary samples. Due to variability in gradient directions across samples, this average direction may not align precisely with the steepest forgetting direction for any individual $z_b$.
    
        \item \textbf{Constrained step size}: The global learning rate $\eta$ must be kept small to prevent a sharp rise in loss on the retain set $D_r$ (avoiding catastrophic forgetting). Although $\|g_b\|$ is large, the product $\|\Delta \Theta\| = \eta \|g_b\|$ remains limited.
    
        \item \textbf{High-curvature instability}: After a single large-step update, gradient directions in high-curvature regions change dramatically. A one-step coarse-grained update rarely lands exactly in the ``forgotten'' state, and without subsequent targeted adjustments, memory persists.
    \end{itemize}
    
    Thus, boundary samples are often in a relatively under-unlearned state.
    
    Normal samples reside in low-curvature (flat) regions where their own gradients satisfy $\nabla_\Theta \ell(z_n) \approx 0$ and the Hessian eigenvalues are extremely small \textit{at the basin center}. Expanding $\ell(z_n)$ around $\Theta_o$:
    \begin{equation}
\begin{split}
\ell(z_n; \Theta_o + \Delta \Theta)
&\approx \ell(z_n; \Theta_o)
+ \nabla_\Theta \ell(z_n)^\top \Delta \Theta \\
&\quad
+ \frac{1}{2}\Delta \Theta^\top H_n \Delta \Theta
+ \cdots.
\end{split}
\end{equation}
    At the flat basin center, the first-order term vanishes and the second-order term is also very small for sufficiently small $\Delta \Theta$. However, although the displacement $\Delta \Theta$ is a small perturbation from a global optimization perspective (due to the small learning rate $\eta$), it is \textit{large relative to the effective radius of the locally flat basin} where $z_n$ resides, because the Hessian eigenvalues are extremely small in that region. This displacement is driven entirely by the needs of boundary samples, not by the local geometry of $z_n$.
    
    Such a large displacement pushes the parameters away from the flat basin center toward regions where the curvature of the loss landscape increases sharply (the basin boundary). As $\Delta \Theta$ reaches these high-curvature regions, the second-order term $\frac{1}{2} \Delta \Theta^\top H_n \Delta \Theta$ and higher-order terms rapidly dominate, leading to a significant increase in loss:
    \begin{equation}
    \ell(z_n; \Theta_o + \Delta \Theta) \gg \ell(z_n; \Theta_o).
    \end{equation}
    
    Consequently, a normal sample originally at the bottom of a flat basin is forcibly ejected from its low-loss region by the large, boundary-sample-driven update. This change is not guided by $z_n$'s own gradient but is a form of ``interference'' from other samples, resulting in representation collapse and leaving normal samples in an over-unlearned state.
    
    Effective forgetting of boundary samples requires precise directional alignment. Preserving normal samples requires minimal parameter displacement, yet coarse-grained updates inevitably produce large displacements driven by $g_b$.

    No single $\eta$ can simultaneously satisfy these conflicting requirements. Hence, coarse-grained undifferentiated updates inevitably suffer from the dilemma: boundary samples are under-unlearned forgotten while normal samples are over-unlearned.
\end{proof}

\section{Detailed Related Work}
\label{app:B}
Machine unlearning was first proposed by \citep{DBLP:conf/sp/CaoY15}, introducing a method that stores statistical aggregates during training and updates them by subtracting the contribution of deleted data points before reconstructing the model. \citep{DBLP:conf/sp/BourtouleCCJTZL21} proposed the SISA training framework, which strategically limits the influence of data samples during the training process to accelerate the forgetting process.

For approximate unlearning, a variety of methods have been developed to efficiently erase data influence through post-training procedures. \citep{DBLP:conf/aaai/GravesNG21} records the data contained in each training batch and the corresponding parameter updates during the training phase. When deleting, they directly subtract from the final model parameters the updates produced by batches that contain the data to be forgotten. \citep{DBLP:conf/cvpr/GolatkarAS20} propose a model weight erasure method: the weights are first fine-tuned toward a state trained solely on retain data using a single Newton step, and then directional noise is added to disrupt the memory of forgotten data. They extended this framework to activation functions \citep{DBLP:conf/eccv/GolatkarAS20}. \citep{DBLP:conf/cvpr/GolatkarARPS21} propose Mixed Linear Forgetting, which decomposes a deep network into non-linear ``core weights'' that do not require forgetting and linear ``user weights'' that can be efficiently forgotten. \citep{DBLP:conf/icml/GuoGHM20} employ a Newton update removal mechanism, computing the influence vectors of the forgotten data points on the model and updating the parameters accordingly. \citep{DBLP:conf/aistats/IzzoSCZ21} propose projected residual updates, whose time complexity is linear in the dimension of the data to be deleted and independent of the dataset size. However, this method is primarily designed for linear models, which limits its scope of application.  \citep{DBLP:conf/iclr/FanLZ0W024} propose SALUN, which selectively updates a subset of weights using weight significance masks, thereby efficiently removing the influence of specific samples or classes while preserving overall model performance. \citep{DBLP:conf/aaai/ChundawatTMK23, DBLP:conf/nips/KurmanjiTHT23} employ a teacher-student framework for machine unlearning. In their approaches, the former proposes a dual-teacher mechanism: the student model mimics the original model (the qualified teacher) on the retain set, and mimics a randomly initialized model (the unqualified teacher) on the forget set. The latter, in contrast, uses only the original model as the teacher, training the student model to stay close to the teacher on the retain set while moving away from the teacher on the forget set. \citep{DBLP:journals/pami/HeLCHH25} proposed \emph{Natural Unlearning} to generate new samples by mixing each forget sample with related samples from the retain set, and then fine-tuning the original model on these mixed samples. 

Another line of work addresses the 
class-wise forgetting task, which requires the model to completely remove knowledge of an entire concept or category. 
\citep{DBLP:journals/tnn/TarunCMK24} propose an efficient machine unlearning method called UNSIR, which learns ``error-maximizing noise'' for the target class and combines a ``damage-repair'' two-step weight update to achieve thorough forgetting of single or multiple classes in a single training pass.
\citep{DBLP:conf/cvpr/ChenGL0W23,DBLP:conf/kdd/ChenG0LA025} shift the focus from parameter space to decision space. Boundary Shrink applies adversarial perturbations to assign incorrect labels to forget samples and fine-tunes the model, thereby compressing the decision boundary. In contrast, Boundary Expanding introduces a new neuron for a ``shadow class'' to disperse forget samples into other categories, actively expanding the decision boundary. For long-tailed distributions, LTMU~\citep{DBLP:conf/kdd/ChenG0LA025} generates augmented features based on the similarity of tail-class samples and reassigns labels according to distance, enabling effective unlearning of tail classes. \citep{DBLP:journals/corr/abs-2601-18650} proposes a dynamic loss reweighting method named FaLW, which mitigates issues of heterogeneous unlearning deviation and skewed unlearning deviation by estimating the unlearning deviation for each sample and introducing a class-wise balancing factor.

Beyond traditional deep learning models, machine unlearning research is gradually extending to broader modeling paradigms and diverse application scenarios. In large language models, unlearning can be used to remove the model's memory of useless or harmful knowledge \citep{DBLP:conf/emnlp/JiaZZLRDK024, DBLP:conf/acl/0010DT0024, DBLP:conf/icml/WuerkaixiWCX00S25, DBLP:conf/nips/JiLZLK0C24}.

\section{Experimental Setup}
\label{app:C}
\paragraph{Dataset.} The basic information about the dataset we used is summarized as follows.

\begin{compactitem}
    \item{\textit{CIFAR-10}~\citep{krizhevsky2009learning}:} it contains $60,000$ colored images of 
                        size $32 \times 32 \times 3$ with $10,000$ reserved for testing. There are $10$ target classes with $5,000$ training images per class.
    \item{\textit{CIFAR-100}~\citep{krizhevsky2009learning}:} it contains $60,000$ colored images with $10,000$ reserved for testing. 
                        There are $100$ target classes with $500$ training images per class.
    \item{\textit{Tiny-ImageNet}:} Tiny ImageNet~\citep{le2015tiny} 
                    contains 120,000 colored images of size $64 \times 64\times 3$ with 10,000 reserved for validation and 10,000 reserved for testing. 
                    There are 200 target classes with 500 training images per class.

    \item{\textit{Lacuna-10}:} A subset selected from VGGFace~\citep{DBLP:conf/sp/CaoY15}. We
randomly select 10 celebrities (classes) with at
least 500 samples.
\end{compactitem}

\paragraph{Baselines.} 
We include and compare against various SOTA machine unlearning methods, which are briefly summarized as follows.
\begin{compactitem}
    \item{\textbf{Original}:} the model trained on all data $\mathcal{D}$ without performing any unlearning. 
    \item{\textbf{Retrain}:} retraining the model from scratch without the forget set, i.e., only $\mathcal{D}_r$, 
            which is assumed not to be viable in practice, but included as a reference.
    \item{\textbf{Fine-tuning}:} finetune the original model on $\mathcal{D}_r$. 
    \item{\textbf{Gradient Ascent}~\citep{DBLP:conf/cvpr/GolatkarAS20}:} finetune the original model on the forget set $\mathcal{D}_f$ by negating the gradient.
    \item{\textbf{Random Labels}~\citep{DBLP:conf/aaai/GravesNG21, DBLP:journals/corr/abs-2012-11849}:} finetune the original model on the random relabeled forget data.
    \item{\textbf{Influence Unlearning}~\citep{DBLP:conf/icml/KohL17,DBLP:conf/aistats/IzzoSCZ21}:}
        utilize an influence function to estimate the updates required for the model weights as a result of 
        removing $\mathcal{D}_f$ from $\mathcal{D}$. 
    \item{\textbf{EU-k}~(\textit{Exact Unlearning-k}) and \textbf{CF-k}~(\textit{Catastrophic Forgetting-k})~\citep{DBLP:journals/corr/abs-2201-06640}:} 
        freeze the first $k$ layers of the original model and either train the remaining layers from scratch on $\mathcal{D}_r$ 
        or finetune the remain layers on $\mathcal{D}_r$.
    \item{\textbf{$\ell_1$-sparse}~\citep{DBLP:conf/nips/JiaLRYLLSL23}:} introduce $\ell_1$-norm for model weights and finetune the model over $\mathcal{D}_r$. 
    \item{\textbf{SalUn}~\citep{DBLP:conf/iclr/FanLZ0W024}:} draw a parallel with input saliency in model explanation to only update influential weights filtered by the threshold, and integrate with Random Label.
    \item{\textbf{Bad-Teaching}~\citep{DBLP:conf/aaai/ChundawatTMK23}:} encourage the student to move close to an incompetent/dumb teacher 
        (randomly initialized model or random generators) for the forget set and close to the original model for the retain set.
    \item{\textbf{SCRUB}~\citep{DBLP:conf/nips/KurmanjiTHT23}:} move the student model away from the teacher model and formulate a min-max bi-optimization problem by extending contrastive learning. 
\end{compactitem}

Besides, we include the following specific unlearning method for the class-wise forgetting task.
\begin{compactitem}
    \item{\textbf{Boundary Shrink} and \textbf{Boundary Expanding}~\citep{DBLP:conf/cvpr/ChenGL0W23}:} 
        is designed to destroy the decision boundary by using adversarial samples for the forget data, 
        and to exploit a new area in the decision space by assigning forget samples to an extra shadow class of the original model, with the replacement of the last layer.
    \item{\textbf{UNSIR}~\citep{DBLP:journals/tnn/TarunCMK24}:} learn an error-maximizing noise matrix for the class to be unlearned and uses it to impair the forget class and repair the original model.
     \item{\textbf{LTMU}~\citep{DBLP:journals/pami/HeLCHH25}:} Directionally rectifies the decision boundary of tail classes and guides their features to contract toward the feature space of neighboring classes for long-tailed unlearning.
\end{compactitem}

For the \textbf{random-subset forgetting task}, we choose to forget about $\approx 10\%$ of the training data.
Therefore, for the CIFAR-10 and CIFAR-100,
we have $|\mathcal{D}_f| = 5K$, and the sizes of train set $\mathcal{D}$, test set $\mathcal{D}_t$, and validation set $\mathcal{D}_v$ correspond to $45K$, $10K$, $5K$ respectively. For the Tiny-ImageNet dataset, we have $|\mathcal{D}_f| = 10K$. 
For the \textbf{class-wise forgetting task}, we used CIFAR-10 and CIFAR-100, and randomly chose class $4$ as the forgetting class for both datasets. Consequently, we have $|\mathcal{D}_f| = 5K$ for CIFAR-10 and $|\mathcal{D}_f| = 500$ for CIFAR-100. In addition, for Lacuna-10, a subset selected from VGGFace~\citep{DBLP:conf/sp/CaoY15} where each selected class contains at least 500 samples, we also randomly choose class $4$ as the forgetting class.

\paragraph{Model.}
For All-CNN~\citep{DBLP:journals/corr/SpringenbergDBR14}, we reduce the number of layers and introduce batch normalization before each non-linearity layer, training the model from scratch. 
For ResNet, we adopt different architectures for different unlearning tasks. For Random-subset forgetting, we utilize the standard ResNet-18 architecture, starting with the pre-trained PyTorch model (trained on ImageNet) and fine-tuning it on the target dataset. For class-wise forgetting, we employ a CIFAR-adapted ResNet-18 architecture, where the first $7 \times 7$ convolutional layer is replaced with a $3 \times 3$ convolutional layer and the initial max-pooling layer is removed to accommodate $32 \times 32$ inputs, training the model from scratch as the original model.
In addition, following \citep{DBLP:conf/nips/KurmanjiTHT23}, we scale up to a larger setting by choosing larger models and tackling a larger classification task, that is,
we used the pre-trained VGG-16~\citep{DBLP:journals/corr/abs-2012-11849} and ViT~\citep{DBLP:conf/iclr/DosovitskiyB0WZ21} models, finetuning them on the CIFAR-100 dataset.

\paragraph{LiRA setup.}
To evaluate the privacy protection effectiveness of MU methods, we implemented the U-LiRA~\citep{DBLP:conf/satml/HayesSTKP25} membership inference attack based on the log-likelihood ratio, 
which is considered the strongest per-sample attack benchmark for unlearning scenarios to date.
To evaluate the privacy protection effectiveness of MU methods, we implemented the U-LiRA~\citep{DBLP:conf/satml/HayesSTKP25} membership inference attack based on the log-likelihood ratio, 
which is considered the strongest per-sample attack benchmark for unlearning scenarios to date. 
The specific attack procedure is as follows:
\begin{compactitem}
    \item{\textbf{Distribution Construction and Sample Filtering:}} For each sample in the restricted sampling pool, we iterated over all shadow models and filtered them based on the sample's membership status in the models. Specifically, we collected the model outputs when the sample was in the forget set to construct the $f_{\mathrm{out}}$ distribution, and collected the model outputs when it was in the unseen sample set to construct the $f_{\mathrm{in}}$ distribution. Thanks to the aforementioned restricted sampling strategy, each target sample obtained a sufficient number of shadow model observations ($N = 128$) in both states, meeting the statistical requirements for precise Gaussian distribution fitting.
    
    \item{\textbf{Attack Scoring:}} For each target variant model, we used the log-likelihood ratio of the query sample under the two Gaussian distributions as the attack score.
    
    \item{\textbf{Evaluation Metrics:}}We conducted evaluations within the fixed pool, selecting samples that were neither in the forget set nor in the retain set as unseen test samples. To ensure unbiased evaluation metrics, we strictly implemented a 1:1 downsampling balancing strategy for positive and negative samples when calculating the AUC and attack accuracy.
\end{compactitem}

We conduct experiments on CIFAR-10 using the standard ResNet-18 architecture. To simulate realistic scenarios and construct distributions for membership inference attacks, following~\citep{DBLP:conf/satml/HayesSTKP25}, we randomly sample 25,000 samples from the CIFAR-10 training set as the training set for each model. 

U-LiRA partitions all base models into 128 shadow models and 128 target models, used for estimating distribution parameters and evaluating attack accuracy, respectively. To ensure statistical significance with a limited number of variants, we introduce a restricted sampling pool mechanism: we pre-sample a fixed pool of 1,000 samples from class 5 of CIFAR-10. Unlike the original U-LiRA's large-scale repeated sampling, we adopt a more efficient strategy — when constructing the forget set for each variant, we sample only 200 samples from the intersection of the pool and the base model's training set. Each base model randomly samples 5 different forget sets, generating 5 variants, resulting in 1,280 trained models in total. This strategy effectively increases the probability that samples from the fixed pool appear in the forget sets of various variants, ensuring that samples receive sufficient observations to accurately fit the distributions even when the total number of models is reduced.

\section{More Experimental Results.}
\label{app:D}
\subsection{Random-subset Forgetting.}
\begin{figure*}[t]
    \centering
    \includegraphics[width=0.8\linewidth]{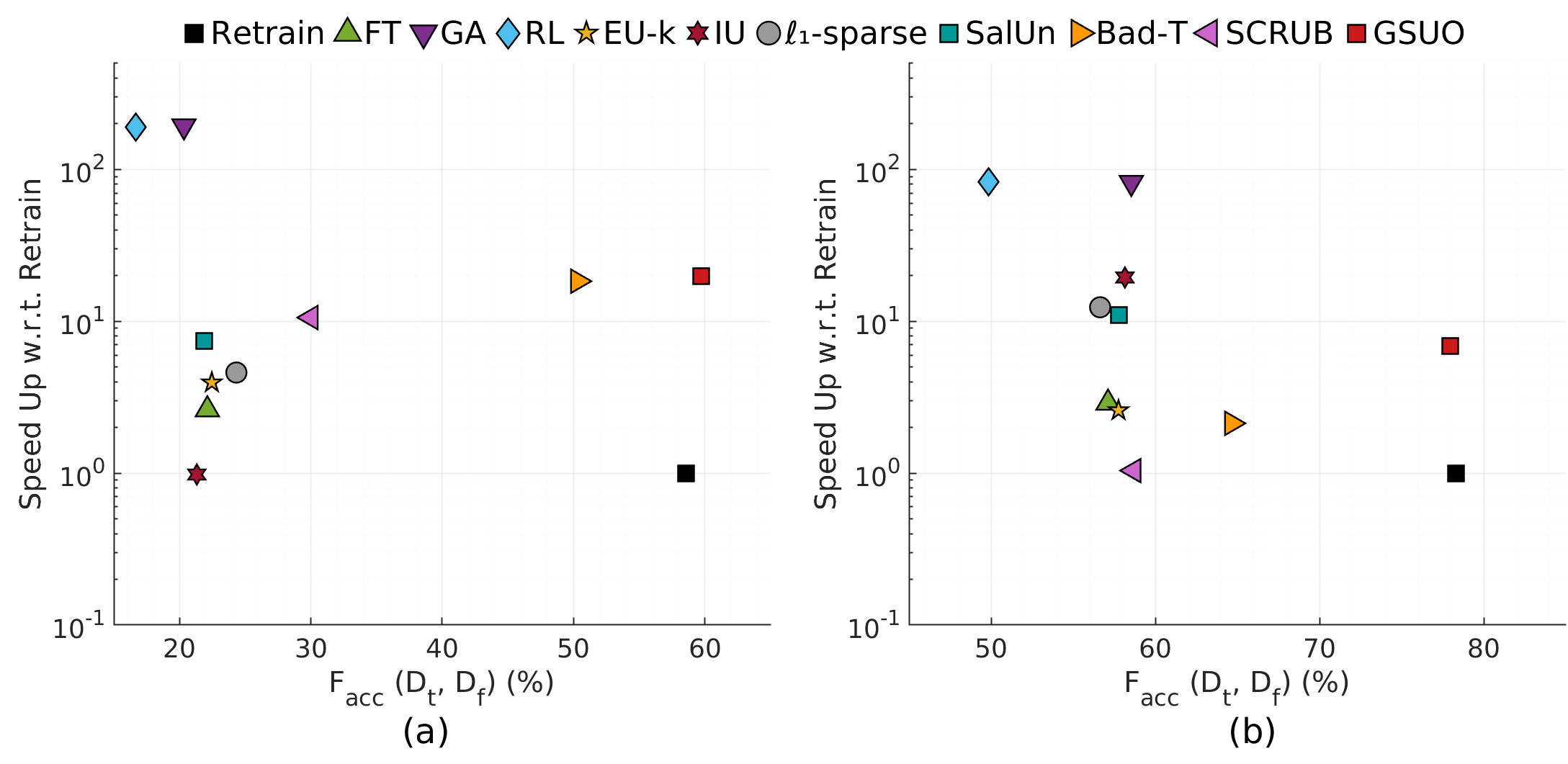}
    \caption{(a) Speedup w.r.t. Retrain versus $F_{acc}$ for vit on CIFAR-100; (b) Speedup w.r.t. Retrain versus $F_{acc}$ for ResNet50 on Tiny ImageNet.}
    \label{fig:facc2}
\end{figure*}

Figure~\ref{fig:facc2} illustrates the trade-off between speedup  and $F_{acc}$ under two settings: (a) ViT on CIFAR-100, and (b) ResNet50 on Tiny ImageNet. $F_{acc}$ jointly considers unlearning effectiveness and model generalization. 
On this metric, \method~outperforms all baselines, achieving state-of-the-art performance. 
In contrast, methods such as FT, SalUN, and IU either retain high accuracy on the forget set (indicating incomplete unlearning) or suffer from a significant drop in accuracy on the test set (indicating poor generalization). 
While efficiency-oriented methods like GA achieve shorter runtime, their low $F_{acc}$ reveal a failure to properly balance the trade-off between unlearning effectiveness and model generalization. 
In contrast, \method~consistently maintains its superiority across different datasets and model architectures, demonstrating strong scalability and robustness. 
Overall, \method~not only surpasses all competing methods in overall performance but also flexibly adapts to unlearning tasks of various scales, making it a practical and robust solution.

\begin{figure*}[t]
    \centering
    \includegraphics[width=0.8\linewidth]{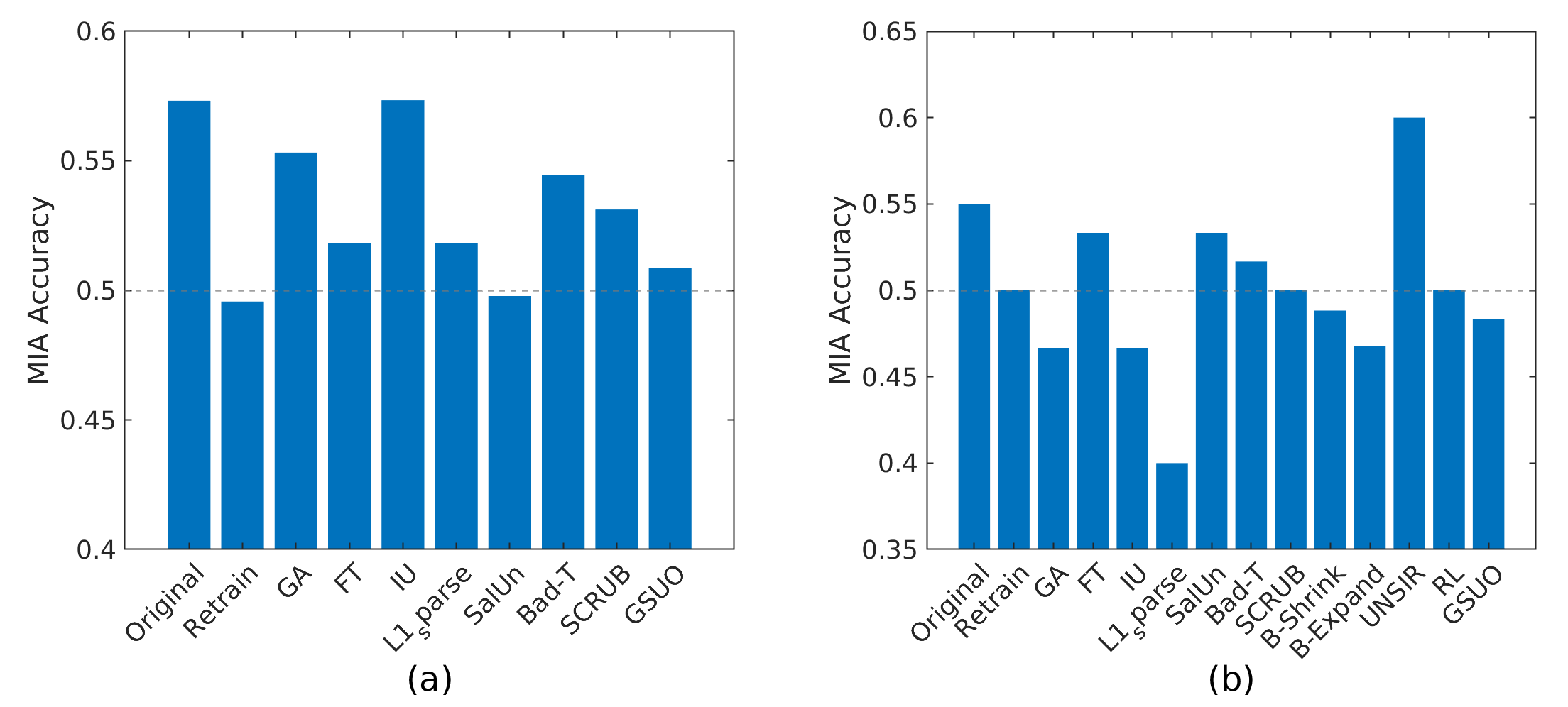}
    
    \caption{(a) MIA performance of various UM methods on the Random-subset forgetting task; (b) MIA performance of various UM methods on the class-wise forgetting task.}
    
    \label{fig:mia}
\end{figure*}

Figure~\ref{fig:mia}(a) presents the MIA attack success rate of various unlearning methods on the Random-subset forgetting task, using an SVM attack model with loss as the attack feature. The MIA accuracy of \method~is very close to 0.5, demonstrating strong protection of the privacy of the data intended to be forgotten and confirming truly effective unlearning. In contrast, methods such as IU and GA fail to achieve this level of privacy protection, with their MIA accuracy deviating significantly from 0.5.

\subsection{Class-wise Forgetting.}

\begin{table*}[t]
    \centering
    \caption{Comparison of MU methods for class-wise forgetting. Left: ALLCNN on Cifar10; Right: Resnet18 on Cifar10}
    \label{tab:compare_class_unlearn2}
    \resizebox{0.8\linewidth}{!}{%
    \begin{tabular}{lccc|cccc}
    \toprule
    \multirow{2}{*}{\textbf{Method}} & \multicolumn{3}{c|}{\textbf{ALLCNN on Cifar10}} & & \multicolumn{3}{c}{\textbf{Resnet18 on Cifar10}} \\
    \cmidrule(lr){2-4} \cmidrule(lr){6-8}
    & \textbf{$Acc(\mathcal{D}_f)$  $\downarrow$} & \textbf{$Acc(\mathcal{D}_r)$  $\uparrow$} & \textbf{Time (s) $\downarrow$} & & \textbf{$Acc(\mathcal{D}_f)$  $\downarrow$} & \textbf{$Acc(\mathcal{D}_r)$  $\uparrow$} & \textbf{Time (s) $\downarrow$} \\
    \midrule
    Original   & 83.40 & 82.48 & 189.32 & & 85.12 & 85.40 & 1038.50 \\
    Retrain    & 0.00  & 85.10 & 428.64 & & 0.00  & 84.07 & 459.81 \\
    \midrule
    GA         & 12.00 & 76.49 & \underline{5.62}   & & 9.40  & 77.50 & 145.99 \\
    B-Shrink   & 10.40 & 75.43 & 48.59  & & 1.20  & 77.08 & 309.46 \\
    B-Expand   & 10.90 & 78.34 & 9.71   & & 11.50 & 64.90 & 27.32 \\
    LTMU       & \textbf{0.00} & 84.67 & 24.16  & & \textbf{0.00} & 85.01 & 53.27 \\
    SCRUB      & 0.20  & \underline{86.02} & 146.05 & & 0.20  & \underline{86.02} & 146.05 \\
    Bad-T      & 45.60 & 78.40 & 268.24 & & 58.00 & 73.14 & 102.94 \\
    SalUn      & \textbf{0.00} & 85.47 & 50.37  & & \textbf{0.00} & 85.32 & 22.03 \\
    RL         & 1.10  & 75.93 & 203.33 & & 9.70  & 79.91 & \underline{6.74} \\
    UNSIR      & 0.30  & 80.16 & 6.13   & & 9.10  & 83.04 & 7.79 \\
    L1-sparse  & \textbf{0.00} & 77.58 & 28.93  & & \textbf{0.00} & 81.26 & 19.53 \\
    FT         & \textbf{0.00} & \underline{86.02} & 93.52  & & \textbf{0.00} & 85.04 & 18.91 \\
    IU         & 13.10 & 77.22 & \textbf{2.68}   & & 12.30 & 80.77 & \textbf{2.83} \\
    \midrule
    \method     & \textbf{0.00} & \textbf{86.92} & 47.86  & & \textbf{0.00} & \textbf{86.24} & 42.55 \\
    \bottomrule
    \end{tabular}%
    }
\end{table*}

Table~\ref{tab:compare_class_unlearn2} presents the results of class unlearning on CIFAR-10 using the ALLCNN and ResNet18 models. Our method \method~achieves 0\% accuracy on the forget set test split, while attaining the highest accuracy on the retain set test split, surpassing all baselines. 
This demonstrates that \method~effectively removes the targeted class information while well preserving the model's utility. In terms of efficiency, \method~achieves nearly a 10$\times$ speedup compared to retraining. Although it is slightly less efficient than methods such as IU and UNSIR, these baselines either suffer from high accuracy on the forget set, indicating incomplete unlearning or cause a significant drop in retain accuracy, leading to poor generalization. Overall, \method~enables efficient unlearning and shows significant advantages in both effectiveness and efficiency, making it a practical and robust choice for real-world unlearning tasks. 

Figure~\ref{fig:mia}(b) presents the MIA attack success rate of various unlearning methods on the class-wise forgetting task. The MIA accuracy of \method~is close to 0.5. Although methods such as SCRUB and RL achieve values even closer to 0.5, considering their overall unlearning effectiveness, they lag behind \method~in terms of the trade-off between privacy protection and model utility.

\subsection{Performance for imbalanced datasets.}
To investigate the performance of \method~on imbalanced data, we construct imbalanced datasets using an exponential decay strategy. 
\paragraph{Dataset}Specifically, we build a long-tailed dataset CIFAR-10-LT based on CIFAR-10, where a few head classes have abundant samples while many tail classes have only limited samples. The number of training samples per class is controlled by:
\begin{equation}
N_i = N_{\mathrm{max}} \times \left( \frac{N_{\mathrm{min}}}{N_{\mathrm{max}}} \right)^{\frac{i}{C-1}}
\end{equation}
where $C$ is the total number of classes, $N_{\mathrm{max}}$ and $N_{\mathrm{min}}$ denote the maximum and minimum sample sizes per class, and $i$ indexes the classes sorted from the most frequent to the least frequent.

\paragraph{Evaluation Metric}To address the potential performance imbalance between head classes (majority classes with many samples) and tail classes (minority classes with few samples) in long-tailed distributions, we introduce two fairness metrics to evaluate model performance across different classes.The first metric is the Weighted Accuracy Difference ($\mathrm{WAD}$), defined as follows:
\begin{equation}
\mathrm{WAD} = \sqrt{\frac{C}{\sum_{c=1}^{C} p_c \cdot (\mathrm{Acc}_c - \mathrm{Acc})^2}}
\end{equation}
where $p_c$ is the sample proportion of class $c$, $\mathrm{Acc}_c$ is the model's accuracy on class $c$, and $\mathrm{Acc}$ is the overall average accuracy. A smaller $\mathrm{WAD}$ indicates more balanced performance across classes.The second metric is the Head-Tail Performance Ratio ($\mathrm{HTR}$), defined as:
\begin{equation}
\mathrm{HTR} = \frac{\frac{1}{|G_h|} \sum_{c \in G_h} \mathrm{Acc}_c}{\frac{1}{|G_t|} \sum_{c \in G_t} \mathrm{Acc}_c}
\end{equation}
where $G_h$ and $G_t$ denote the sets of head classes and tail classes, respectively, and $|G_h|$ and $|G_t|$ are their corresponding cardinalities. $\mathrm{HTR}$ measures the ratio of average accuracy between head and tail classes, reflecting the model's fairness toward tail classes.

\begin{table*}[t]
    \centering
    \caption{Performance of Different UM Methods on CIFAR-10-LT.}
    \label{tab:compare_class_unlearn_lt}
    \resizebox{0.85\linewidth}{!}{%
    \begin{tabular}{lcccc|cccc}
    \toprule
    \multirow{2}{*}{\textbf{Method}} & \multicolumn{4}{c|}{\textbf{Head class}} & \multicolumn{4}{c}{\textbf{Tail class}} \\
    \cmidrule(lr){2-5} \cmidrule(lr){6-9}
    & \textbf{Acc($\mathcal{D}_f$)} \% $\downarrow$ & \textbf{Acc($\mathcal{D}_r$)} \% $\uparrow$ & \textbf{WAD $\downarrow$} & \textbf{HTR} & 
    \textbf{Acc($\mathcal{D}_f$)} \% $\downarrow$ & \textbf{Acc($\mathcal{D}_r$)} \% $\uparrow$ & \textbf{WAD $\downarrow$} & \textbf{HTR} \\
    \midrule
    Original   & 85.5 & 44.5 & 0.1997 & 1.9225 & 69.2 & 60.3 & 0.1761 & 1.4678 \\
    Retrain    & 0 & 64.0 & 0.1251 & 1.2605 & 0 & 73.1 & 0.1229 & 1.2069 \\
    \midrule
    GA         & 23.2 & 43.5 & 0.2250 & 1.5279 & 3.5 & 56.7 & 0.2220 & 1.9328 \\
    B-Shrink   & 0.2 & 37.7 & 0.2737 & 1.4455 & 2.2 & 55.6 & 0.2460 & 1.8436 \\
    B-Expand   & 12.0 & 42.3 & 0.1314 & \textbf{0.9793} & 2.2 & 55.9 & 0.2427 & 1.8105 \\
    LTMU       & \textbf{0} & 50.3 & 0.1940 & 1.5196 & 0.6 & 59.0 & 0.1974 & 1.7225 \\
    SCRUB      & 2.4 & \textbf{65.8} & \textbf{0.1228} & 1.2312 & 15.2 & 70.6 & \underline{0.1238} & 1.2308 \\
    Bad-T      & 77.5 & \underline{62.8} & 0.1445 & 1.2777 & 47.9 & 67.4 & 0.1963 & 1.4779 \\
    SalUn      & \textbf{0} & 48.6 & 0.1792 & 1.6420 & 5.6 & 71.0 & 0.1212 & 1.2490 \\
    RL         & \textbf{0} & 37.8 & \underline{0.1233} & 0.8997 & 1.8 & 55.6 & 0.2401 & 1.7449 \\
    UNSIR      & \textbf{0} & 42.4 & 0.2222 & 0.3938 & \textbf{0} & 61.1 & 0.1831 & \textbf{0.9182} \\
    L1-sparse  & \textbf{0} & 44.8 & 0.2607 & 1.3600 & \textbf{0} & 73.4 & 0.1242 & 1.2903 \\
    FT         & \textbf{0} & 57.6 & 0.1633 & 1.3435 & \textbf{0} & \textbf{75.8} & 0.1083 & \underline{1.1547} \\
    IU         & 23.2 & 44.4 & 0.2714 & 1.4436 & \textbf{0} & 62.2 & 0.1956 & 1.6674 \\
    \midrule
    \method     & \textbf{0} & 54.3 & 0.1428 & \underline{1.0579} & \textbf{0} & \underline{74.6} & \textbf{0.1080} & 1.1613 \\
    \bottomrule
    \end{tabular}%
    }
\end{table*}

Table~\ref{tab:compare_class_unlearn_lt} presents the performance of various unlearning methods on CIFAR-10-LT. We evaluate both head and tail classes. 
Results show that \method~achieves complete unlearning while maintaining high retain test accuracy, surpassing both the original model and methods such as B-shrink, B-Expand, and LTMU. 
On tail classes, the test accuracy of \method~even exceeds that of retrain. 

From the perspective of fairness, \method~also outperforms many baselines. For the evaluation metrics, a smaller $\mathrm{WAD}$ indicates better performance, while $\mathrm{HTR}$ values closer to $1$ are preferable. As shown in Table~\ref{tab:compare_class_unlearn_lt}, \method~ranks among the top, achieving an $\mathrm{HTR}$ of $1.0579$ on head classes and a $\mathrm{WAD}$ of $0.1080$ on tail classes. In contrast, methods such as B-shrink, IU, and SalUN fail to adequately balance performance between head and tail classes.

\subsection{Multi-Class Forgetting.}
\begin{table*}[t]
    \centering
    \caption{Comparison of MU methods for multi-class forgetting on Cifar10 under ALLCNN. }
    \label{tab:compare_multi_class_unlearn}
    
    \resizebox{0.85\linewidth}{!}{%
    \begin{tabular}{lrrrrr}
    \toprule
    \textbf{Method} & \textbf{$Acc(\mathcal{D}_r)$} \, $\uparrow$ & \textbf{$Acc(\mathcal{D}_{rt})$} \, $\uparrow$ & \textbf{$Acc(\mathcal{D}_{f})$} \, $\downarrow$ & \textbf{$Acc(\mathcal{D}_{ft})$} \, $\downarrow$ & \textbf{Time (s)} \, $\downarrow$\\
    \midrule
    Original   & 92.70 & 84.59 & 88.23 & 74.50 & 205.81 \\
    Retrain    & 99.96 & 89.89 & 0.00  & 0.00  & 401.91 \\
    \midrule
    FT         & 99.41 & 90.31 & \textbf{0.00}  & \textbf{0.00}  & 84.21 \\
    GA         & 71.73 & 67.35 & 3.39 & 3.10 & 21.02 \\
    RL         & 77.99 & 70.57 & 0.06 & 0.05 & 31.35 \\
    EU-k       & 96.54 & 90.09 & \textbf{0.00}  & \textbf{0.00}  & 398.99 \\
    IU         & 90.46 & 84.16 & 42.98  & 34.50  & \textbf{4.22} \\
    $\ell_1$-sparse & 86.04 & 82.55 & \textbf{0.00} & \textbf{0.00} & 422.61 \\
    SalUn      & \underline{99.58} & 90.30 & 0.04 & 0.05 & 499.14 \\
    Bad-T & 92.14 & 86.35 & 81.56 & 70.70 & 121.34 \\
    SCRUB      & \textbf{99.72} & \textbf{92.26} & 29.06  & 21.65  & 265.85 \\
    B-Shrink   & 70.46 & 65.86 & 4.08  & 3.75  & 29.73 \\
    B-Expand   & 90.35 & 83.50 & 28.88  & 23.10  & 228.35 \\
    UNSIR      & 87.33 & 83.29 & 0.06  & 0.05  & \underline{11.26} \\
    \midrule
    \textbf{\method} & 98.85 & \underline{90.75} & \textbf{0.00} & \textbf{0.00} & 52.64 \\
    \bottomrule
    \end{tabular}%
    }
\end{table*}

We evaluated multi-class forgetting performance of various machine unlearning methods on the CIFAR-10 dataset with the AllCNN architecture. We selected class~3 and class~4. 
The results are shown in Table~\ref{tab:compare_multi_class_unlearn}. 
while methods like FT, EU-k, and $\ell_1$-sparse achieve complete forgetting, others such as GA, IU, SalUn, Bad-T, SCRUB, and UNSIR fail to fully erase the target information. 
Notably, among all evaluated methods, \method~achieves the best balance between forgetting quality and model utility.
Furthermore, \method~achieves an 8$\times$ speedup compared to retraining. Although IU has the greatest advantage in runtime, its forgetting quality is poor. In summary, \method~achieves an optimal trade-off among forgetting completeness, model utility, and computational efficiency in multi-class forgetting tasks, validating its effectiveness in complex class-incremental scenarios.

\subsection{Visual Analysis.}
\begin{figure*}[t]
    \centering
    \includegraphics[width=0.85\linewidth]{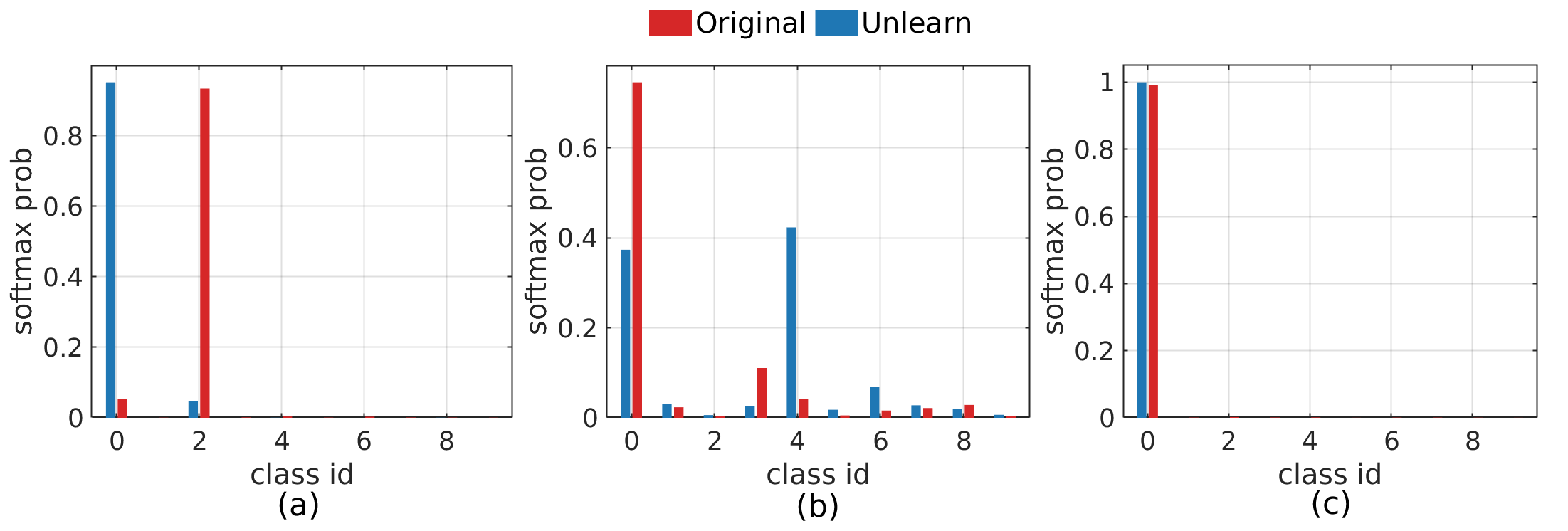}
    
    \caption{(a)-(c) Distribution of prediction changes on three forgotten samples (Random-subset forgetting).}
    
    \label{fig:distribution}
\end{figure*}

Figure~\ref{fig:distribution} illustrates the distribution of three samples before and after forgetting in Random-subset forgetting. 
Panels (a) and (b) show predicted distributions of randomly selected Boundary samples before and after forgetting, while (c) shows a randomly selected Normal sample.
Boundary samples typically not only have high probability on their true class but also exhibit a spread of probability mass over several other classes. 
After forgetting, the boundary samples undergo a probability mass transfer across the boundary toward the most similar class. 
This change encourages the boundary samples to be naturally forgotten while causing minimal damage to neighboring sample points. 
Normal samples, which are originally located in the dense region of the entire class data, tend to remain correctly predictable after forgetting, but their confidence decreases, as shown in (c). 
Visualization of predicted distributions shows that \method~successfully forgets the target samples.

\begin{figure*}[t]
    \centering
    \includegraphics[width=0.85\linewidth]{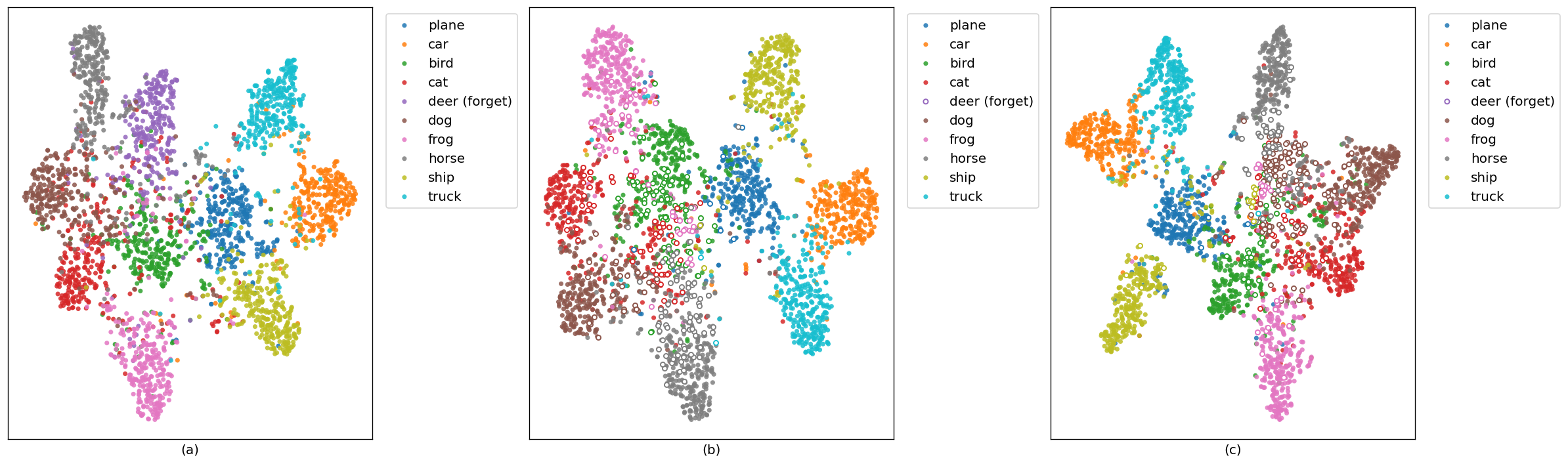}
    
    \caption{Distribution of prediction changes on forgotten samples (class-wise forgetting): (a) Original Model; (b) Retrained Model; (c) Unlearned Model.}
    
    \label{fig:distribution2}
\end{figure*}

Figure~\ref{fig:distribution2} illustrates the feature space distributions of the original model, the retrained model, and the unlearned model in class-wise forgetting. 
The experiments use CIFAR-10 with the AllCNN model. 
Features are projected onto a two-dimensional plane using t-SNE.
The unlearned model’s distribution closely resembles that of the retrained model. 
The intra-class structure of the forgotten data in the feature space tends to become loose, dispersedly distributed in the central vacant regions of the retain data. 
Most forgotten samples are predicted by the model as the class labels of the nearest retain data in the feature space. The above results demonstrate that \method~achieves the desired forgetting effect in the class-wise forgetting scenario, making its distribution in the feature space highly similar to that of the ``gold standard''——the retrained model.


\end{document}